%% file: main.tex
\PassOptionsToPackage{table}{xcolor}
\documentclass{article} 
\usepackage{referencepaper}

\input{math_commands.tex}

\usepackage{amsthm}
\theoremstyle{plain}
\newtheorem{proposition}{Proposition}
\newtheorem{corollary}{Corollary}

\usepackage{booktabs,tabularx,multirow,array,makecell,float}
\usepackage{placeins}
\usepackage{graphicx}
\usepackage{xcolor}
\usepackage{xspace}
\usepackage{pifont}
\usepackage{enumitem}
\floatstyle{ruled}
\newfloat{algorithm}{H}{loa}
\floatname{algorithm}{Algorithm}
\floatstyle{plain}
\usepackage{hyperref}
\usepackage{url}
\hypersetup{hidelinks}

\definecolor{MethodBlue}{HTML}{EDF5FC}
\definecolor{GainBlue}{HTML}{DCECF8}
\definecolor{GroupGray}{HTML}{F2F3F5}

\newcommand{\BaseOvosUniform}{48.98}
\newcommand{\BaseOvosSupport}{54.08}

\newcommand{\BaseOvosLone}{54.07}
\newcommand{\BaseOvosLtwo}{57.21}
\newcommand{\BaseOvosLthree}{49.48}
\newcommand{\BaseOvosLfour}{35.16}
\newcommand{\BaseOst}{50.52}
\newcommand{\BaseUcs}{45.33}
\newcommand{\BaseMean}{48.28}

\newcommand{\CsaOvosUniform}{53.89}
\newcommand{\CsaOvosSupport}{57.98}
\newcommand{\CsaOvosPolicyGain}{4.09}
\newcommand{\CsaOvosUniformLone}{58.60}
\newcommand{\CsaOvosUniformLtwo}{61.68}
\newcommand{\CsaOvosUniformLthree}{55.11}
\newcommand{\CsaOvosUniformLfour}{40.17}
\newcommand{\CsaOvosUniformFOneOne}{62.87}
\newcommand{\CsaOvosUniformFOneTwo}{64.43}
\newcommand{\CsaOvosUniformFOneThree}{48.51}
\newcommand{\CsaOvosUniformFTwoOne}{59.09}
\newcommand{\CsaOvosUniformFTwoTwo}{51.79}
\newcommand{\CsaOvosUniformFTwoThree}{74.15}
\newcommand{\CsaOvosUniformFThreeOne}{65.33}
\newcommand{\CsaOvosUniformFThreeTwo}{33.33}
\newcommand{\CsaOvosUniformFThreeThree}{66.67}
\newcommand{\CsaOvosUniformFFourOne}{24.43}
\newcommand{\CsaOvosUniformFFourTwo}{67.78}
\newcommand{\CsaOvosUniformFFourThree}{28.30}
\newcommand{\CsaOvosLone}{67.26}
\newcommand{\CsaOvosLtwo}{65.10}
\newcommand{\CsaOvosLthree}{57.26}
\newcommand{\CsaOvosLfour}{42.30}
\newcommand{\CsaOvosFOneOne}{69.46}
\newcommand{\CsaOvosFOneTwo}{71.13}
\newcommand{\CsaOvosFOneThree}{61.19}
\newcommand{\CsaOvosFTwoOne}{60.10}
\newcommand{\CsaOvosFTwoTwo}{58.33}
\newcommand{\CsaOvosFTwoThree}{76.87}
\newcommand{\CsaOvosFThreeOne}{68.67}
\newcommand{\CsaOvosFThreeTwo}{34.92}
\newcommand{\CsaOvosFThreeThree}{68.18}
\newcommand{\CsaOvosFFourOne}{26.72}
\newcommand{\CsaOvosFFourTwo}{70.00}
\newcommand{\CsaOvosFFourThree}{30.19}
\newcommand{\CsaOst}{63.93}
\newcommand{\CsaUcs}{52.77}
\newcommand{\CsaOvosTrainGain}{4.91}
\newcommand{\CsaOstTrainGain}{13.41}
\newcommand{\CsaUcsTrainGain}{7.44}

\newcommand{\ReplicaOvosUniform}{54.10}
\newcommand{\ReplicaOvosSupport}{58.00}
\newcommand{\ReplicaOvosPolicyGain}{3.90}
\newcommand{\ReplicaOst}{63.24}
\newcommand{\ReplicaUcs}{54.26}
\newcommand{\ReplicaMean}{57.20}
\newcommand{\HeadlineMean}{56.86}

\newcommand{\CommonInterfaceOst}{64.72}
\newcommand{\CommonInterfaceUcs}{51.34}
\newcommand{\CommonInterfaceMean}{58.03}
\newcommand{\NativeInterfaceMean}{58.75}

\newcommand{\BaseOstAState}{48.28}
\newcommand{\BaseOstAO}{36.36}
\newcommand{\BaseOstAInfo}{68.96}
\newcommand{\BaseOstEst}{33.80}
\newcommand{\BaseOstJudge}{60.70}
\newcommand{\BaseOstCount}{61.37}
\newcommand{\BaseOstTemporal}{44.86}
\newcommand{\CsaOstAState}{53.83}
\newcommand{\CsaOstAO}{51.78}
\newcommand{\CsaOstAInfo}{84.69}
\newcommand{\CsaOstEst}{43.63}
\newcommand{\CsaOstJudge}{72.72}
\newcommand{\CsaOstCount}{78.54}
\newcommand{\CsaOstTemporal}{62.45}

\newcommand{\BaseOstPosJudge}{58.65}
\newcommand{\BaseOstPosEst}{42.12}
\newcommand{\BaseOstOriJudge}{55.32}
\newcommand{\BaseOstOriEst}{37.02}
\newcommand{\BaseOstExistJudge}{83.29}
\newcommand{\BaseOstExistTemporal}{64.07}
\newcommand{\BaseOstQuantity}{61.37}
\newcommand{\BaseOstDiversity}{76.33}
\newcommand{\BaseOstOrder}{59.71}
\newcommand{\BaseOstDirJudge}{43.35}
\newcommand{\BaseOstDirTemporal}{25.57}
\newcommand{\BaseOstDirEst}{21.58}
\newcommand{\BaseOstDistJudge}{48.27}
\newcommand{\BaseOstDistTemporal}{44.93}
\newcommand{\BaseOstDistEst}{34.47}
\newcommand{\CsaOstPosJudge}{65.82}
\newcommand{\CsaOstPosEst}{45.00}
\newcommand{\CsaOstOriJudge}{61.30}
\newcommand{\CsaOstOriEst}{43.20}
\newcommand{\CsaOstExistJudge}{94.28}
\newcommand{\CsaOstExistTemporal}{82.40}
\newcommand{\CsaOstQuantity}{78.54}
\newcommand{\CsaOstDiversity}{87.92}
\newcommand{\CsaOstOrder}{80.29}
\newcommand{\CsaOstDirJudge}{60.06}
\newcommand{\CsaOstDirTemporal}{41.67}
\newcommand{\CsaOstDirEst}{48.45}
\newcommand{\CsaOstDistJudge}{59.34}
\newcommand{\CsaOstDistTemporal}{63.29}
\newcommand{\CsaOstDistEst}{37.89}
\newcommand{\DeltaOstPosJudge}{+7.17}
\newcommand{\DeltaOstPosEst}{+2.88}
\newcommand{\DeltaOstOriJudge}{+5.97}
\newcommand{\DeltaOstOriEst}{+6.18}
\newcommand{\DeltaOstExistJudge}{+10.99}
\newcommand{\DeltaOstExistTemporal}{+18.33}
\newcommand{\DeltaOstQuantity}{+17.17}
\newcommand{\DeltaOstDiversity}{+11.59}
\newcommand{\DeltaOstOrder}{+20.57}
\newcommand{\DeltaOstDirJudge}{+16.71}
\newcommand{\DeltaOstDirTemporal}{+16.09}
\newcommand{\DeltaOstDirEst}{+26.86}
\newcommand{\DeltaOstDistJudge}{+11.07}
\newcommand{\DeltaOstDistTemporal}{+18.36}
\newcommand{\DeltaOstDistEst}{+3.42}

\newcommand{\BaseUcsEgo}{41.72}
\newcommand{\BaseUcsTrajectory}{48.49}
\newcommand{\BaseUcsObject}{43.87}
\newcommand{\BaseUcsProximity}{49.49}
\newcommand{\CsaUcsEgo}{48.26}
\newcommand{\CsaUcsTrajectory}{56.14}
\newcommand{\CsaUcsObject}{53.17}
\newcommand{\CsaUcsProximity}{54.35}
\newcommand{\CsaUcsMyTrajectory}{54.87}
\newcommand{\CsaUcsObjectTrajectory}{59.95}
\newcommand{\CsaUcsEgoPosition}{46.77}
\newcommand{\CsaUcsRelativePose}{49.21}
\newcommand{\CsaUcsDistance}{51.71}
\newcommand{\CsaUcsReachability}{56.43}
\newcommand{\CsaUcsObjectCategory}{59.59}
\newcommand{\CsaUcsQuantity}{50.33}

\newcommand{\ReplayRows}{39{,}561}
\newcommand{\PreservationRows}{33{,}180}
\newcommand{\TrajectoryRows}{6{,}381}
\newcommand{\TrajectoryTurns}{45{,}006}
\newcommand{\TrajectoryMass}{0.5}
\newcommand{\ShiftedOst}{61.51}
\newcommand{\ChronoSeedTwoOst}{64.52}
\newcommand{\ShiftedSeedTwoOst}{62.11}
\newcommand{\OriginGapSeedOne}{2.42}
\newcommand{\OriginGapSeedTwo}{2.41}
\newcommand{\ChronoTwoSeedMean}{64.22}
\newcommand{\ShiftedTwoSeedMean}{61.81}
\newcommand{\OrderGain}{2.41}

\newcommand{\MeanFrames}{80.50}

\newcommand{\EqualBudgetOvos}{52.40}

\newcommand{\AdaptiveEqualBudgetGain}{5.58}

\newcommand{\CategoricalControlOvos}{58.03}
\newcommand{\CategoricalControlOst}{47.48}
\newcommand{\CategoricalControlUcs}{46.72}
\newcommand{\CategoricalControlMean}{50.74}
\newcommand{\TrajectoryControlOvos}{52.03}
\newcommand{\TrajectoryControlOst}{51.03}
\newcommand{\TrajectoryControlUcs}{49.22}
\newcommand{\TrajectoryControlMean}{50.76}

\newcommand{\CsaOvosLfourGain}{5.01}         
\newcommand{\CommonInterfaceOstDelta}{1.48}  
\newcommand{\NativeInterfaceUcsDelta}{2.92}  

\newcommand{\DeltaUcsTrajectory}{+7.65}       
\newcommand{\DeltaUcsMyTrajectory}{+7.38}     
\newcommand{\DeltaUcsObjectTrajectory}{+8.46} 
\newcommand{\DeltaUcsEgo}{+6.54}              
\newcommand{\DeltaUcsEgoPosition}{+6.45}      
\newcommand{\DeltaUcsRelativePose}{+6.60}     
\newcommand{\DeltaUcsProximity}{+4.86}        
\newcommand{\DeltaUcsDistance}{+5.59}         
\newcommand{\DeltaUcsReachability}{+4.29}     
\newcommand{\DeltaUcsObject}{+9.30}           
\newcommand{\DeltaUcsObjectCategory}{+6.09}   
\newcommand{\DeltaUcsQuantity}{+10.73}        

\newcommand{\TrainLone}{+4.54}
\newcommand{\TrainLtwo}{+4.46}
\newcommand{\TrainLthree}{+5.63}
\newcommand{\TrainLfour}{+5.01}
\newcommand{\AccessLone}{+8.66}
\newcommand{\AccessLtwo}{+3.43}
\newcommand{\AccessLthree}{+2.15}
\newcommand{\AccessLfour}{+2.13}

\newcommand{\LocalCompatOvosUniform}{53.73}
\newcommand{\LocalCompatOst}{64.08}
\newcommand{\LocalCompatUcs}{52.65}
\newcommand{\LocalCompatAccuracy}{63.33}

\newcommand{\SameTypeKFourOvos}{53.50}
\newcommand{\SameTypeKFourOst}{64.06}
\newcommand{\SameTypeKFourUcs}{51.70}
\newcommand{\SameTypeKFourReadout}{62.29}

\newcommand{\CrossTypeKFourOvos}{52.07}
\newcommand{\CrossTypeKFourOst}{64.34}
\newcommand{\CrossTypeKFourUcs}{51.77}
\newcommand{\CrossTypeKFourReadout}{66.88}

\newcommand{\StateWriteOvosUniform}{52.18}

\newcommand{\StateWriteOst}{63.42}
\newcommand{\StateWriteUcs}{54.41}
\newcommand{\StateWriteReadout}{58.13}

\newcommand{\NoTypeOst}{64.45}
\newcommand{\NoTypeReadout}{66.67}
\newcommand{\UnbalancedOst}{64.34}

\newcommand{\UnbalancedY}{28.75}
\newcommand{\UnbalancedN}{97.08}
\newcommand{\JointChoiceOst}{64.57}
\newcommand{\JointChoiceReadout}{63.96}
\newcommand{\UnnormalizedStateMassOst}{64.54}
\newcommand{\UnnormalizedStateMassReadout}{67.92}

\newcommand{\ProbeKOneRows}{1{,}920}
\newcommand{\ProbeKOneStates}{960}
\newcommand{\FinalProbeRows}{7{,}680}
\newcommand{\FinalProbeStates}{960}

\newcommand{\OstWithheldScenes}{243}

\newcommand{\method}{ChronoReplay\xspace}
\newcommand{\stitle}[1]{\vspace{1mm} \noindent {\bf #1}}
\newcommand{\chOne}{\ding{182}}
\newcommand{\chTwo}{\ding{183}}

\title{ChronoReplay: One Answer Interface Is Not Enough for Streaming Spatial Intelligence}

\author{Yihao~Hu\textsuperscript{1,\ensuremath{\ast}},
Yitong~Lin\textsuperscript{2,\ensuremath{\ast}},
Pan~Wang\textsuperscript{3,\ensuremath{\ast}},
Zhihao~Wen\textsuperscript{4,\ensuremath{\dagger}},
Hang~Wang\textsuperscript{1}, Yiwen~Zhao\textsuperscript{5},
Deyuan~Liu\textsuperscript{1}, Yutong~Li\textsuperscript{6},
Mingda~WAN\textsuperscript{1}, Yuchen~Mao\textsuperscript{7},
Tao~Lin\textsuperscript{1,\ensuremath{\dagger}}}
\affiliations{%
\textsuperscript{1}Westlake University,\quad
\textsuperscript{2}ZJU,\quad
\textsuperscript{3}USTC,\quad
\textsuperscript{4}SMU,\quad
\textsuperscript{5}FDU,\quad
\textsuperscript{6}ECNU,\quad
\textsuperscript{7}SJTU\\[4pt]
\textsuperscript{\ensuremath{\ast}}Equal contribution,\quad
\textsuperscript{\ensuremath{\dagger}}Corresponding author}

\begin{document}

\raggedbottom
\setlength{\parskip}{6pt}
\setlength{\textfloatsep}{6pt}
\setlength{\dbltextfloatsep}{6pt}
\setlength{\floatsep}{6pt}
\setlength{\dblfloatsep}{6pt}
\setlength{\intextsep}{6pt}

\maketitle

\begin{abstract}
Training a spatial VLM to answer which side an object lies on does not require
it to preserve the metric displacement that determines that answer, while
supervision for a distance estimate may leave categorical relations largely
unconstrained. When post-training emphasizes one readout, improvements can
therefore concentrate on that interface while transfer to others remains weak
or even degrades; we call this observed specialization \emph{interface lock-in}.
\method{} addresses it by replaying heterogeneous spatial supervision as
chronologically ordered episodes, retaining source-native targets, and
balancing supervision across families. Under the same observation access, a
single \method{} checkpoint improves OVO-S-Bench, OST-Bench, and UCS-Bench
together, including a \CsaOstTrainGain{}-point gain on OST-Bench, and achieves
the highest three-benchmark mean among the supervision settings compared here.
Same-data controls further separate the ingredients of this transfer: answer
serialization redistributes gains across OST and UCS, family-balanced learning
improves over ordinary joint SFT and natural turn weighting, and chronological
replay exceeds a shifted-order control by about \OrderGain{} OST points across
two paired seeds. Adaptive frame selection then changes evidence access while
keeping the trained model fixed, providing an additional inference-time gain
at lower visual cost. \textbf{Streaming spatial performance depends on what
distinctions supervision preserves, how heterogeneous histories are organized
during learning, and what causal evidence inference makes available at test time.}
\par\smallskip
{\small
\textbf{Code:} \url{https://github.com/yihaohu0118/ChronoReplay}\\[2pt]
\textbf{Project Homepage:} \url{https://yihaohu0118.github.io/ChronoReplay/}\\[2pt]
\textbf{Model and Training Annotations:}
\href{https://huggingface.co/collections/mis2/chronoreplay-6adc88dc1e468c2d2ae9bb44}{Hugging Face collection}\par}
\end{abstract}

\input{sections/01_introduction}
\input{sections/02_related}
\input{sections/03_problem_formulation}
\input{sections/04_method}
\input{sections/05_experiments}
\input{sections/06_conclusion}

\clearpage
\subsection*{AI use statement}

Generative AI was used in two roles.  AI assistants supported literature
search, research brainstorming, manuscript revision, code editing, debugging,
and data-pipeline implementation.  In upstream data preparation, multimodal
teachers were also used for bounded tasks such as proposing linguistic forms
or assessing whether source evidence was visible and answerable.  These
teachers did not define spatial ground truth, and protected evaluation labels,
predictions, and scores were unavailable to data construction.  Every target
used by \method{} is bound to a source annotation, deterministic geometry,
simulator state, or an audited public label; reported results are produced by
frozen benchmark scorers.  The authors designed the research, checked
AI-assisted code, citations, claims, and prose against source papers and
versioned experimental records, and take responsibility for the complete work.

\subsection*{Ethics statement}

This work uses existing research datasets and public benchmarks; it recruits
no human participants and collects no new video or indoor scans.  Source media
may contain identifiable people, private-like interiors, or contextual
information.  Any artifact release therefore follows the corresponding source
licenses and does not redistribute benchmark media.  Canonical scene
isolation prevents reported evaluation scenes and labels from entering
training.  Streaming spatial models may nevertheless inherit demographic,
geographic, and environmental biases from their source data, and their ability
to retain people, objects, and locations over time could enable surveillance
or unsafe embodied decisions.  We have not audited these high-stakes uses and
do not claim suitability for surveillance, autonomous navigation, or other
safety-critical deployment.

\subsection*{Reproducibility statement}

Appendices~\ref{app:protocol}--\ref{app:experimental-details} specify the
evaluation contracts, replay data, scene isolation, chronological
construction, optimization, and controlled comparisons.
Appendix~\ref{app:temporal-support} separately specifies the fixed-parameter
observation-access intervention, and Appendix~\ref{app:reproducibility} records
the executable configurations, generation parameters, scorer contracts, and
source manifests.  Every retained trainable comparison starts independently
from the same pinned public Qwen3-VL-8B-Instruct revision; no intermediate
checkpoint continuation or model fusion is used. Reported OVO-S, OST, and UCS
scores bind model, data, protocol, prediction, and scorer identities, and all
reported values are defined once as macros in \path{main.tex} and traced to
versioned result records.  Code, configurations, frozen training-selection
manifests, and scoring scripts are available at
\href{https://github.com/yihaohu0118/ChronoReplay}{the code repository};
\href{https://yihaohu0118.github.io/ChronoReplay/}{the project homepage}
provides the paper and an overview of the released resources.
Model weights and converted training annotations are available through a
single \href{https://huggingface.co/collections/mis2/chronoreplay-6adc88dc1e468c2d2ae9bb44}{Hugging Face collection}.
The data release preserves the frozen sample order, native targets, training
weights, and supervision masks; raw media and saved predictions are not bundled.
Releasing the accepted records makes reproducing
the reported training mixture independent of re-querying any external AI
service.

\clearpage
\bibliography{references}
\bibliographystyle{iclr2027_conference}

\clearpage
\appendix

\input{sections/09_theory}
\input{sections/08_appendix}

\end{document}

%% file: math_commands.tex
\usepackage{amsmath,amsfonts,bm}

\def\eqref#1{equation~\ref{#1}}

\def\1{\bm{1}}

\DeclareMathAlphabet{\mathsfit}{\encodingdefault}{\sfdefault}{m}{sl}
\SetMathAlphabet{\mathsfit}{bold}{\encodingdefault}{\sfdefault}{bx}{n}



%% file: sections/01_introduction.tex
\section{Introduction}

A home robot follows you through your apartment and is later asked:
\emph{Is the mirror on my left or right?  How far is the sofa from here?
When did we pass the plant?}  One walk, three questions, three answer forms:
an option, a number in meters, a moment in time.  Answering them from a video
stream means associating objects across views, accumulating the observer's
motion, and recovering relations in changing reference frames
\citep{yang2025thinking,plizzari2024spatial,yang2025mmsi,wang2026spatial4d,thompson2025rem}, using only what was observed before the
question; streaming spatial
benchmarks now pose all three over one causal history.

Recent progress comes from two directions.  Architectural work retains the
stream with visual memories, bounded caches, fast weights, maps, and scene
graphs \citep{song2024moviechat,shen2025longvu,liu2026spatialttt,gu2026spacemind}; data work
post-trains VLMs on spatial instruction data
\citep{wu2025spatialmllm,xu2025multispatialmllm,yang2025cambrians}.  Both
are typically \emph{single-interface}: a recipe supervises one answer format
and reports the benchmark that shares it; even recipes that mix qualitative
and metric questions \citep{chen2024spatialvlm,cheng2024spatialrgpt} train on
single images, so whether one model can serve categorical, metric, and
temporal queries over the same causal history is untested.
Building on this, we show that changing the supervision can achieve this
without a new memory module or new annotation.

Post-training on individual supervision families exposes two challenges.
On the one hand, \chOne{} \emph{training can improve its dominant
readout while transferring unevenly to other native readouts}. Starting from
Qwen3-VL-8B, the categorical-only control reaches
\CategoricalControlOvos{} on OVO-S but lowers OST from \BaseOst{} to
\CategoricalControlOst{}; trajectory-only training leaves OST nearly flat at
\TrajectoryControlOst{}. We use \emph{interface lock-in} to name this
empirical specialization across spatial readouts. The control motivates
joint training, while differences in source and task coverage limit
attribution to answer format alone (Appendix~\ref{app:single-family-control}).
On the other hand,
\chTwo{} \emph{a metric or temporal
target is valid only under the history that produced it}.  An observer that
moves forward after facing north and one that does so after facing east share
the same recent observations and the same local displacement, yet a later
question about a previously seen object has different answers.  Supervision
that detaches a target from its valid prefix can therefore be ambiguous;
it is crucial to keep each target
attached to the chronological prefix under which it holds.

\begin{figure}[t]
  \centering
  \vspace{-10pt}
  \includegraphics[width=\textwidth]{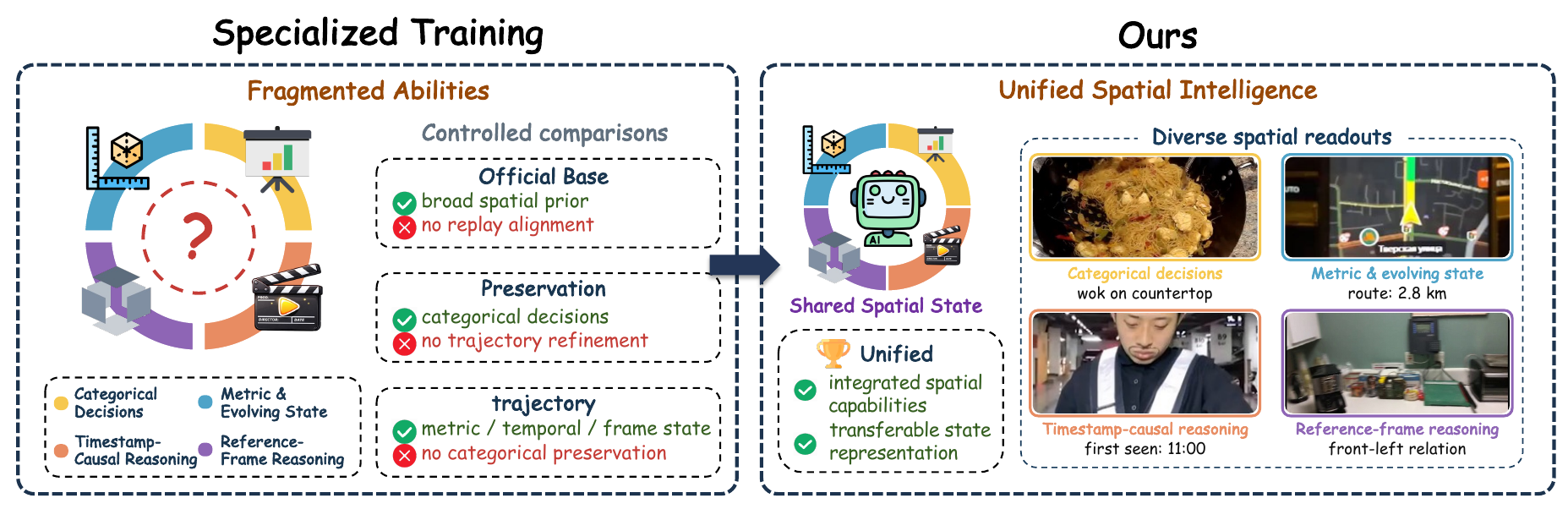}
  \vspace{-10pt}
  \caption{\textbf{Why reusable spatial state.} Categorical preservation and
  trajectory supervision constrain complementary spatial distinctions.
  Causal Replay combines them in shared parameters. The illustrated readouts
  span categorical, metric, temporal, and reference-frame queries.}
  \label{fig:spatial-teaser}
\end{figure}

In this paper, we propose \method{}, which consists of two components.
\textbf{First}, to address \chOne{}, \emph{family-balanced learning} keeps
every target in its \emph{native format} (option semantics, units and
precision, or timestamp) instead of collapsing it into one canonical answer
type, and balances the two families in one objective to regulate their
relative influence on the shared model. \textbf{Second}, to address \chTwo{},
\emph{Causal Replay construction} replays both supervision families as
chronologically ordered episodes in which every target is attached to the
causal prefix under which it is valid, so an accumulated displacement or a
retrieved timestamp is learned together with the origin and motion that give
it meaning (\emph{chronological alignment}). The main contributions are:
\begin{itemize}[topsep=-1pt,itemsep=1.5pt,parsep=0pt,partopsep=0pt]
\item We document an empirical \emph{interface lock-in} pattern (\chOne{}):
single-family training transfers unevenly across spatial readouts.
\item To address \chOne{} and \chTwo{}, we propose \method{}: family-balanced
learning trains all interfaces in their native formats under one objective,
and Causal Replay construction keeps every target attached to its
chronological prefix. \method{} gains +\CsaOstTrainGain{} OST-Bench,
+\CsaUcsTrainGain{} UCS-Bench, and +\CsaOvosTrainGain{} OVO-S-Bench points
under identical observation access, with OVO-S numerically comparable to a
published 235B reference.
\item Matched interventions show a repeatable chronological-order benefit
and a task-dependent answer-format tradeoff; adaptive frame selection at
inference adds a separate gain at lower visual cost.
\end{itemize}

%% file: sections/02_related.tex
\section{Related Work}
\label{sec:related}

\stitle{Streaming spatial intelligence.}
Streaming and long-video models process observations causally and keep
history in compressed visual memories, hierarchical summaries, recurrent
memory bridges, active response mechanisms, or bounded key--value caches
\citep{chen2024videollmonline,ren2024timechat,qian2024videostreaming,zhang2024flashvstream,he2024malmm,wang2025videollamb,shu2025videoxl,zhang2024omnilive,huang2025online,qian2025dispider,chen2025livecc,xiong2025streamchat,yang2025streammem,zeng2025streamforest}; StreamingBench and OVO-Bench formalize timestamped
evaluation under causal access \citep{lin2024streamingbench,niu2025ovobench},
and SimpleStream characterizes the trade-off
between recent perception and historical retention \citep{shen2026simple}.
Related memory designs for visual agents include L$^{2}$-VMAS, which separates
perception and thinking into latent memories \citep{yu2026duallatent}, and
AtlasVA, which combines spatial heatmaps, visual exemplars, and symbolic
skills to retain visually grounded experience \citep{wang2026atlasva}.
Spatial benchmarks further require history to stay coherent under object and
observer motion and changes of reference frame, from 3D question answering,
embodied spatial tests, and long-form egocentric video
\citep{azuma2022scanqa,ma2023sqa3d,du2024embspatial,wang2024spatialeval,mangalam2023egoschema} to multi-view, 4D, and mental-model probes
\citep{li2025stibench,wang2025mindcube}; OVO-S-Bench, OST-Bench,
and UCS-Bench extend them to causal streams with categorical, continuous,
and timestamp-dependent queries.  In this work, we hold
observation access fixed and treat it as an orthogonal factor
(Section~\ref{sec:access-decomposition}), and we use these three benchmarks
because their answer formats differ; the recipe adds no memory mechanism, and
our evaluation covers indoor scenes, so other environments remain untested.

\stitle{Learning reusable spatial representations.}
Spatial MLLMs acquire geometry through geometry-aware visual features,
multi-view or 3D-reconstruction inputs, and large-scale spatial instruction
or reinforcement data
\citep{cai2025spatialbot,liu2025coarse,zheng2025video3dllm,qi2025gpt4scene,fan2026vlm3r,ouyang2025spacer,wu2025vilasr}, and persistent spatial state has been
represented through 3D scene tokens and embodied generalist agents
\citep{hong2023threedllm,huang2024leo}, adaptive fast weights, voxelized
allocentric maps, textual scene descriptions, evolving scene graphs, and
action-conditioned belief histories
\citep{hua2026unleashing,liao2026spamem}.  Some mix qualitative and metric spatial
questions on single images,
and none that we know of asks whether categorical, metric, and temporal
formats interfere when one checkpoint is trained on all of them over one
causal stream.  This is a multi-task question: which
distinctions does a shared representation keep when objectives compete for
the same parameters
\citep{caruana1997multitask,sener2018mtl,standley2020tasks,zhang2022mtlsurvey},
how should their losses be balanced, and what does fine-tuning on
one distribution erase from another
\citep{kirkpatrick2017ewc,luo2025forgetting,zhai2023mllmforgetting}?
Instruction-tuning mixtures raise the same question at the level of data
\citep{wei2022flan,longpre2023flan,liu2023llava}, and the answer format
itself changes what a model appears to know.  In sequential state tracking,
PAR aligns recurrent predictions with intermediate states to improve length
generalization \citep{wang2026par}.  In this work, we examine the chronology of
training episodes and the format and weighting of their targets, and measure the
outcome through native benchmarks.

%% file: sections/03_problem_formulation.tex
\section{Problem Formulation}
\label{sec:formulation}

At time $t$ a model receives the causal observation history
$h_t=(o_1,\ldots,o_t)$.  A spatial query $q$ arrives with a native answer
interface $\kappa\in\mathcal K$ (a categorical option set, a metric value with
units, or a timestamp), and the VLM predicts
\begin{equation}
z_t=f_\theta(h_t),
\qquad
\widehat y_t^{(\kappa)}
=
g_\theta(z_t,q,\kappa),
\label{eq:streaming-prediction}
\end{equation}
against the task-defined target $y_t^{(\kappa)}=A_\kappa(h_t,q)\in\mathcal Y_\kappa$.
We call the representation $z_t$ \emph{reusable} over a family of
query--interface pairs when every native target in that family can be read
out from the same $z_t$.  Reusability is a functional requirement, not a
structural one: the state may stay distributed across the VLM, but histories
that lead to different answers must remain distinguishable.

Two properties of spatial histories make this demanding.  First,
interfaces refine one another.  A categorical target separates spatial regions, whereas a
metric target for the same quantity resolves variation within a region, so a
representation that keeps only the coarse partition cannot support the finer
readout.  Second, the state depends on the origin of the history.  An observer
that moves forward after facing north and one that moves forward after facing
east share the same recent observations and the same local displacement, yet a
later question about a previously seen object requires different answers.
Histories with identical recent suffixes but different origins must therefore
map to different states.  Our goal is to learn one $f_\theta$ with both
properties from heterogeneous native supervision;
Section~\ref{sec:method} builds training episodes that expose both.

%% file: sections/04_method.tex
\section{\method}
\label{sec:method}

\subsection{Overview}

\method{} post-trains a VLM so that one shared representation supports native
spatial readouts while preserving the history each readout depends on.
Figure~\ref{fig:chronoreplay-overview} summarizes two stages.  \emph{Causal
Replay construction} forms the replay pool: every target is kept in its
native format and every episode is ordered chronologically.  \emph{Family-balanced learning} normalizes the
single-turn preservation and metric--temporal trajectory risks, then controls
their relative contribution to the shared parameters.
Algorithm~\ref{alg:chronoreplay} in Appendix~\ref{app:replay-data} lists both
stages.

\begin{figure}[!htbp]
  \centering
  \vspace{-4pt}
  \includegraphics[width=1.00\textwidth]{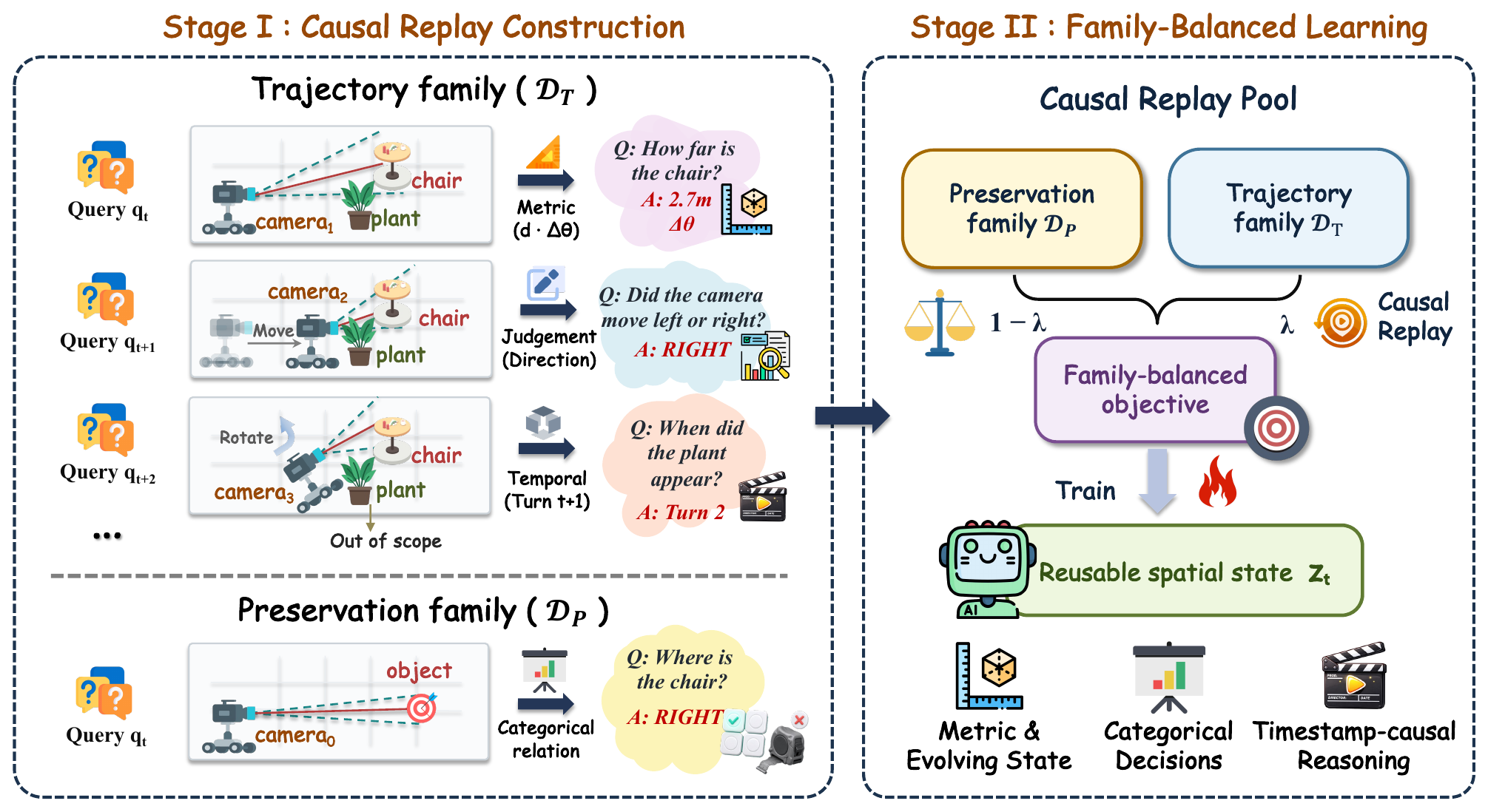}
  \vspace{-10pt}
  \caption{\textbf{Overview of \method{}.} Stage I (Causal Replay
  construction) builds ordered trajectory supervision $\mathcal D_T$ and
  single-turn preservation examples $\mathcal D_P$. Stage II (family-balanced
  learning) combines their normalized native-format risks with weights
  $1-\lambda$ and $\lambda$.  \emph{Out of scope} marks observations beyond a
  query's causal frontier.}
  \label{fig:chronoreplay-overview}
  \vspace{-10pt}
\end{figure}

\subsection{Causal Replay Construction}
\label{sec:causal-replay-construction}

\subsubsection{Native-Format Targets}
\label{sec:native-refinement}

\stitle{Motivation.}
\emph{How should heterogeneous answer spaces be combined?}
Considering \chOne{}, we keep each target in the value space of its source
interface, avoiding an additional answer-space transformation when combining
categorical options, metric quantities with units and precision, and temporal
answers. This is a design choice for retaining source task semantics, rather
than a claim that native encoding is uniformly superior. The format through
which a model answers can change what it appears to know
\citep{wang2024myanswer} and the decoder can emit metric values directly
\citep{vacareanu2024regressor}.

\stitle{Native readouts.}
Let $F_\theta(C)$ denote the VLM representation of a causal prefix $C$
(Section~\ref{sec:formulation}).  For interface $\kappa$, let $Y^\kappa$
denote its native target and $G_\kappa$ the readout applied to
$Z=F_\theta(C)$.  Its population risk is
\begin{equation}
\mathcal R_\kappa(F_\theta,G_\kappa)
=
\mathbb E
\left[
-\log G_\kappa
\left(
Y^\kappa\mid Z,Q
\right)
\right].
\label{eq:native-interface-risk}
\end{equation}
For a fixed representation and a sufficiently expressive readout, the optimal
risk equals the conditional entropy $H(Y^\kappa\mid Z,Q)$, so a low risk
requires $Z$ to retain the information needed to recover the native target.
Proposition~\ref{prop:sufficiency} in Appendix~\ref{app:theory} makes this
precise. Corollary~\ref{cor:canonical} shows that lossless re-encoding
preserves this information requirement. Our native/canonical control changes
output serialization without losing target content and finds crossed
OST/UCS transfer effects.

\stitle{Retained value spaces.}
Categorical targets keep their candidate structure; metric targets keep
magnitude, sign, units, and annotation precision; temporal targets keep
timestamps, event indices, and transition-relative quantities;
reference-frame information stays coupled to the query.  All targets are
produced by the original autoregressive decoder.

\subsubsection{Chronologically Aligned State Trajectories}
\label{sec:origin-aligned-trajectories}

\emph{What does a metric or temporal target mean without its history?}
With each target's format fixed, we now fix its history.  Considering
\chTwo{}, the same local displacement yields different world states from
different origins (Section~\ref{sec:formulation}), so a target detached from
its prefix is ambiguous. We use \emph{temporal-origin alignment} to denote a
data-construction rule: supervision from the interfaces present in a source
episode is placed on that episode's video-time axis and replayed in its
original order, with each target attached to its legal causal prefix.
Unlike flattening multi-round annotations into independent question--answer
pairs, chronological replay preserves the source order and legal conditioning
history used by the streaming evaluation protocol.

\stitle{Ordered episodes.}
Let $c_{ij}$ denote the legal causal prefix at turn $j$ of episode $i$: all
observations available at that turn together with the preceding interaction
history.  We represent an episode as
\begin{equation}
\mathcal E_i
=
\left(
c_{ij},q_{ij},y_{ij},\kappa_{ij}
\right)_{j=1}^{m_i},
\qquad
c_{ij}\preceq c_{i,j+1},
\label{eq:replay-episode}
\end{equation}
where $\preceq$ denotes causal prefix inclusion, $q_{ij}$ is a spatial query,
$\kappa_{ij}$ identifies its native interface, and
$y_{ij}=A_{\kappa_{ij}}(c_{ij},q_{ij})$ is the target valid at that frontier.
Single-turn preservation examples correspond to $m_i=1$, whereas multi-round
metric--temporal examples form longer episodes.  Because the prefixes are
nested, one episode yields $m_i$ supervised states that share an origin and
grow in history length, so the model is trained on accumulation itself, on
how a displacement or an event order changes from turn to turn, rather than
on isolated snapshots.

\stitle{Implicit state trajectory.}
Applying the VLM to successive prefixes induces
\begin{equation}
z_{ij}=F_\theta(c_{ij}),
\qquad
\mathcal Z_i
=
\left(z_{i1},\ldots,z_{im_i}\right).
\label{eq:state-trajectory}
\end{equation}
The trajectory stays implicit within the VLM, and its first prefix defines
the episode origin. No map, memory module, or auxiliary state loss is added.
The origin-shift control in Section~\ref{sec:controlled-analysis} changes the
ordering and preceding context while retaining each turn block and its local
supervision. It tests the contribution of this construction rule as a whole;
it does not identify a particular latent origin representation. Proposition~
\ref{prop:origin} in Appendix~\ref{app:theory} characterizes the risk when
the changed context makes the target ambiguous.

\subsection{Family-Balanced Learning}
\label{sec:family-balanced-learning}

\stitle{Motivation.}
\emph{What if the two families differ in size?}  With targets and histories
fixed, the remaining choice is how much each family may move the shared
parameters. The preservation family $\mathcal D_P$ holds \PreservationRows{}
single-turn examples (30,635 categorical and 2,545 numerical), whereas the trajectory family
$\mathcal D_T$ holds \TrajectoryRows{} ordered episodes (\TrajectoryTurns{}
supervised metric, counting, judgement, and temporal turns).  Raw row counts
could let one family dominate the shared parameters and reproduce \chOne{}
inside joint training.  Therefore, we normalize risk within each family and
set their ratio explicitly rather than learning it
\citep{kendall2018uncertainty,chen2018gradnorm}.

\stitle{Family risks.}
Let $\ell_r^P$ be the native loss for preservation example $r$, with
source-defined weight $w_r$ (Appendix~\ref{app:replay-data}). Let $\ell_{ij}^T$ be the native loss for verified
trajectory transition $(i,j)$, and define
$\mathcal T=\{(i,j):1\leq i\leq|\mathcal D_T|,\ 1\leq j\leq m_i\}$.  The two
family risks are
\begin{equation}
\mathcal L_P
=
\frac{
\sum_{r\in\mathcal D_P}w_r\ell_r^P
}{
\sum_{r\in\mathcal D_P}w_r
},
\qquad
\mathcal L_T
=
\frac{
\sum_{(i,j)\in\mathcal T}\ell_{ij}^T
}{
|\mathcal T|
}.
\label{eq:family-risks}
\end{equation}

\stitle{Balanced objective.}
\method{} combines the risks as
\begin{equation}
\mathcal L_{\mathrm{replay}}
=
(1-\lambda)\,\mathcal L_P
+
\lambda\,\mathcal L_T,
\qquad
\lambda=\frac{\alpha}{1+\alpha},
\label{eq:causal-replay-objective}
\end{equation}
where $\alpha$ is trajectory loss mass relative to preservation. We use
$\alpha=\TrajectoryMass{}$ in the main experiments: trajectory receives $1/3$
of total mass; preservation receives $2/3$, split equally between categorical
and numerical targets. Appendix~\ref{app:experimental-details} specifies token
normalization and serialized weighting; Proposition~\ref{prop:balance} states
their invariances. Table~\ref{tab:joint-objectives} compares ordinary token SFT
and natural turn weighting on the same replay pool.

Answer-length normalization assigns each supervised turn an average token loss,
preventing long responses from dominating solely through token count. Within
the trajectory family, turns receive equal weight, so an episode contributes
in proportion to its supervised turns. Family weighting controls preservation
and trajectory loss mass separately. This distinguishes response length,
supervision frequency, and family balance within the training objective.

%% file: sections/05_experiments.tex
\section{Experiments}
\label{sec:experiments}

In this section, we evaluate \method{} from multiple perspectives and
organize the analysis around four research questions.
\textbf{RQ1:} Does \method{} transfer across categorical,
continuous, and timestamp-causal benchmarks under fixed observation access
(Section~\ref{sec:main-results})?
\textbf{RQ2:} How do single-family and joint training transfer across spatial
readouts
(Section~\ref{sec:controlled-analysis})?
\textbf{RQ3:} Which components carry the effect: native formats,
chronological order, or family mass (Section~\ref{sec:controlled-analysis})?
\textbf{RQ4:} How much of the OVO-S system score is observation access rather
than training (Section~\ref{sec:access-decomposition})?

\subsection{Experimental Setup}
\label{sec:experimental-setup}

We evaluate on three benchmarks: OVO-S-Bench tests categorical reasoning
on egocentric perception, object permanence, and allocentric mapping
\citep{li2026ovosbench}; OST-Bench scores agent state, visible information,
temporal localization, counting, and agent--object relations over multi-round
exploration \citep{lin2025ostbench}; and UCS-Bench has 8,114 timestamp-causal
questions on trajectories, pose, proximity, object identity, and quantity
\citep{wang2026keep}.  All checkpoints start from the same official
Qwen3-VL-8B-Instruct revision \citep{qwen3vl2025}.  \method{} trains the
language model and visual merger (visual encoder frozen) on the
\PreservationRows{}-sequence preservation family and the
\TrajectoryRows{}-episode trajectory family with $\alpha=\TrajectoryMass{}$,
and canonical scene identities keep every test scene out of training.
The single-family diagnostic compares \textsc{Categorical-only} and
\textsc{Trajectory-only} variants with full \method{}
(Appendix~\ref{app:single-family-control}).
Its categorical-only arm is distinct from the main preservation pool, which
also contains numerical targets.
\textsc{Shifted} cyclically rotates the turn blocks of every episode while
keeping their content, controlling for chronological alignment;
\textsc{Canonical} rewrites every target as a typed \texttt{value} field,
controlling for native formats (the \method{} default is \textsc{Native});
and Uniform-81 matches the average frame count of adaptive access, an
equal-cost control for observation access.  Model-only comparisons hold causal
observation access fixed (OVO-S Uniform-128, the complete scene-disjoint OST
protocol, all UCS questions), while adaptive access
(Appendix~\ref{app:temporal-support}) is an inference-time policy applied
separately to the same checkpoint.  Published models are reference points
under their own protocols, and mechanism claims rely on matched Qwen3-VL-8B
comparisons.  Appendix~\ref{app:experimental-details} lists optimization
settings, matched controls, and a four-GPU replication.

\subsection{Cross-Interface Transfer (RQ1)}
\label{sec:main-results}

\begin{table}[!htb]
\caption{\textbf{Complete task-family results on OVO-S-Bench} (\%), twelve
official task families. Our rows report accuracy within each family; Overall
is their unweighted mean. Published reference scores are quoted from the
benchmark paper. Uniform-128 is the matched training comparison; the final
row applies adaptive access to the same trained model.}
\label{tab:ovos-main}
\centering
\scriptsize
\setlength{\tabcolsep}{1.8pt}
\renewcommand{\arraystretch}{1.08}
\resizebox{\textwidth}{!}{%
\begin{tabular}{l*{14}{c}}
\toprule
& \multicolumn{3}{c}{Level 1}
& \multicolumn{3}{c}{Level 2}
& \multicolumn{3}{c}{Level 3}
& \multicolumn{3}{c}{Level 4}
& Overall & Frames\\
\cmidrule(lr){2-4}
\cmidrule(lr){5-7}
\cmidrule(lr){8-10}
\cmidrule(lr){11-13}
Model
& 1.1 & 1.2 & 1.3
& 2.1 & 2.2 & 2.3
& 3.1 & 3.2 & 3.3
& 4.1 & 4.2 & 4.3
& & \\
\midrule
\rowcolor{GroupGray}
\multicolumn{15}{l}{\textit{Representative official prefix-access models}}\\
InternVL-3.5-8B
& 47.86 & 54.60 & 35.31
& 40.95 & 40.31 & 56.22
& 54.00 & 26.98 & 60.61
& 34.65 & 55.00 & 28.30
& 44.57 & 128\\
Qwen3-VL-4B
& 32.91 & 53.19 & 43.65
& 47.26 & 34.25 & 63.04
& 57.33 & 38.10 & 68.18
& 25.74 & 70.00 & 28.30
& 46.83 & 128\\
Qwen3-VL-32B
& 48.83 & 57.07 & 44.51
& 50.71 & 34.58 & 70.43
& 60.67 & 26.98 & 68.18
& 32.67 & 68.33 & 22.64
& 48.80 & 128\\
InternVL-3.5-38B
& 58.64 & 55.56 & 49.74
& 50.30 & 48.00 & 64.91
& 61.33 & 22.22 & 53.03
& 38.61 & 68.33 & 18.87
& 49.13 & 128\\
InternVL-3.5-241B-A28B
& 61.30 & 59.36 & 45.98
& 48.75 & 49.86 & 68.54
& 53.33 & 33.33 & 68.18
& 42.57 & 65.83 & 13.21
& 50.85 & 128\\
GPT-5.4
& 52.08 & 63.70 & 48.13
& 46.25 & 57.92 & 68.61
& 61.33 & 36.51 & 54.55
& 32.67 & 62.50 & 26.42
& 50.89 & 128\\
Qwen3.5-27B
& 46.50 & 58.13 & 49.88
& 55.24 & 41.53 & 68.67
& 54.00 & 36.51 & 66.67
& 35.64 & 71.67 & 35.85
& 51.69 & 128\\
Qwen3.5-397B-A17B
& 46.71 & 53.32 & 48.68
& 55.65 & 41.22 & 69.30
& 57.33 & 38.10 & 78.79
& 34.65 & 75.00 & 26.42
& 52.10 & 128\\
Qwen3-VL-235B-A22B
& 50.36 & 56.09 & 50.91
& 49.76 & 45.61 & 70.17
& 63.33 & 46.03 & 74.24
& 45.54 & 70.83 & 20.75
& 53.64 & 128\\
Gemini-3.1-Pro
& 58.69 & 71.18 & 55.89
& 57.92 & 59.97 & 74.23
& 67.33 & 41.27 & 59.09
& 60.40 & 81.67 & 22.64
& 59.19 & 128\\
\midrule
\rowcolor{GroupGray}
\multicolumn{15}{l}{\textit{Our Qwen3-VL-8B evaluation under matched causal access}}\\
Official Qwen3-VL-8B (Uniform-128)
& 57.49 & 60.31 & 44.40
& 54.55 & 47.02 & 70.07
& 59.33 & 26.98 & 62.12
& 19.85 & 61.11 & 24.53
& \BaseOvosUniform{} & 128\\
\rowcolor{MethodBlue}
\textbf{\method{} (Uniform-128)}
& \CsaOvosUniformFOneOne{} & \CsaOvosUniformFOneTwo{} & \CsaOvosUniformFOneThree{}
& \CsaOvosUniformFTwoOne{} & \CsaOvosUniformFTwoTwo{} & \CsaOvosUniformFTwoThree{}
& \CsaOvosUniformFThreeOne{} & \CsaOvosUniformFThreeTwo{} & \CsaOvosUniformFThreeThree{}
& \CsaOvosUniformFFourOne{} & \CsaOvosUniformFFourTwo{} & \CsaOvosUniformFFourThree{}
& \textbf{\CsaOvosUniform{}} & 128\\
\method{} + adaptive access
& \CsaOvosFOneOne{} & \CsaOvosFOneTwo{} & \CsaOvosFOneThree{}
& \CsaOvosFTwoOne{} & \CsaOvosFTwoTwo{} & \CsaOvosFTwoThree{}
& \CsaOvosFThreeOne{} & \CsaOvosFThreeTwo{} & \CsaOvosFThreeThree{}
& \CsaOvosFFourOne{} & \CsaOvosFFourTwo{} & \CsaOvosFFourThree{}
& \CsaOvosSupport{} & \MeanFrames{}\\
\bottomrule
\end{tabular}}
\end{table}

\stitle{Categorical streaming reasoning.}
\textbf{Under matched Uniform-128 access, \method{} raises OVO-S from
\BaseOvosUniform{} to \CsaOvosUniform{}.} The score is numerically comparable
to the published Qwen3-VL-235B-A22B reference. All twelve families improve;
L4 rises from \BaseOvosLfour{} to \CsaOvosUniformLfour{} but remains the
lowest-scoring level. The adaptive-access row is a system effect that
Section~\ref{sec:access-decomposition} separates from training.

\begin{table}[!htb]
\caption{\textbf{Fine-grained results on OST-Bench} (\%), official fifteen
subtypes. Published rows are reference points under their reported full-split
protocols; shaded rows use our matched scene-disjoint evaluation.
$\Delta$ is \method{} minus base. JUD., TEMP., CNT., and EST. denote
judgement, temporal localization, counting, and estimation.}
\label{tab:ost-main}
\centering
\scriptsize
\setlength{\tabcolsep}{1.65pt}
\renewcommand{\arraystretch}{1.10}
\resizebox{\textwidth}{!}{%
\begin{tabular}{lcccccccccccccccc}
\toprule
& \multirow{3}{*}{Avg}
& \multicolumn{4}{c}{Agent State}
& \multicolumn{5}{c}{Agent Visible Information}
& \multicolumn{6}{c}{Agent--Object Spatial Relationship}\\
\cmidrule(lr){3-6}
\cmidrule(lr){7-11}
\cmidrule(lr){12-17}
& &
\multicolumn{2}{c}{Position}
& \multicolumn{2}{c}{Orientation}
& \multicolumn{2}{c}{Existence}
& \multicolumn{1}{c}{Quantity}
& \multicolumn{1}{c}{Diversity}
& \multicolumn{1}{c}{Order}
& \multicolumn{3}{c}{Direction}
& \multicolumn{3}{c}{Distance}\\
\cmidrule(lr){3-4}
\cmidrule(lr){5-6}
\cmidrule(lr){7-8}
\cmidrule(lr){9-9}
\cmidrule(lr){10-10}
\cmidrule(lr){11-11}
\cmidrule(lr){12-14}
\cmidrule(lr){15-17}
Model
&
& JUD. & EST.
& JUD. & EST.
& JUD. & TEMP.
& CNT.
& JUD.
& JUD.
& JUD. & TEMP. & EST.
& JUD. & TEMP. & EST.\\
\midrule
\rowcolor{GroupGray}
\multicolumn{17}{l}{\textit{Representative published MLLMs}}\\
InternVL-2.5-8B
& 44.6
& 51.7 & 26.1
& 49.4 & 40.4
& 86.3 & 51.3
& 56.4
& 60.7
& 38.4
& 37.2 & 33.8 & 22.8
& 43.0 & 42.9 & 27.9\\
Qwen2.5-VL-72B
& 45.6
& 57.0 & 27.6
& 52.2 & 37.1
& 86.1 & 64.5
& 61.5
& 75.7
& 34.5
& 41.4 & 21.1 & 8.2
& 44.5 & 39.3 & 18.7\\
Qwen2.5-VL-32B
& 46.9
& 51.0 & 31.1
& 53.5 & 39.4
& 85.3 & 64.8
& 59.2
& 73.4
& 41.8
& 39.5 & 24.9 & 25.7
& 43.6 & 39.1 & 20.3\\
Claude-3.5-Sonnet
& 47.8
& 65.1 & 36.2
& 50.6 & 30.3
& 85.8 & 67.0
& 57.9
& 57.1
& 60.0
& 39.2 & 18.3 & 21.9
& 43.8 & 54.9 & 19.0\\
GPT-4o
& 48.7
& 55.6 & 20.5
& 45.6 & 33.6
& 90.6 & 75.6
& 59.8
& 78.2
& 59.6
& 46.1 & 19.5 & 21.4
& 43.1 & 50.6 & 20.4\\
Gemini-2.0-Flash
& 49.5
& 59.7 & 27.3
& 56.7 & 36.5
& 89.6 & 70.8
& 59.4
& 78.7
& 55.6
& 42.0 & 17.8 & 21.5
& 45.5 & 48.9 & 27.1\\
InternVL-2.5-38B
& 50.8
& 56.7 & 31.8
& 54.6 & 38.4
& 91.7 & 74.7
& 61.1
& 79.8
& 62.1
& 42.1 & 20.6 & 27.7
& 42.7 & 42.5 & 28.1\\
InternVL-2.5-78B
& 51.1
& 60.8 & 34.4
& 49.9 & 40.7
& 90.7 & 74.4
& 65.9
& 77.9
& 61.2
& 43.4 & 22.4 & 17.8
& 46.7 & 44.4 & 22.9\\
GPT-4.1
& 53.4
& 64.2 & 30.7
& 60.8 & 33.2
& 90.8 & 78.0
& 60.6
& 82.1
& 70.8
& 51.5 & 23.4 & 28.6
& 44.9 & 53.6 & 23.9\\
Gemini-2.0-Flash (Thinking)
& 54.2
& 57.4 & 36.3
& 61.1 & 33.4
& 88.1 & 74.8
& 61.9
& 63.6
& 73.1
& 50.9 & 51.7 & 23.3
& 51.1 & 56.8 & 22.7\\
\midrule
\rowcolor{GroupGray}
\multicolumn{17}{l}{\textit{Our scene-disjoint Qwen3-VL-8B evaluation}}\\
Official base
& \BaseOst{}
& \BaseOstPosJudge{}
& \BaseOstPosEst{}
& \BaseOstOriJudge{}
& \BaseOstOriEst{}
& \BaseOstExistJudge{}
& \BaseOstExistTemporal{}
& \BaseOstQuantity{}
& \BaseOstDiversity{}
& \BaseOstOrder{}
& \BaseOstDirJudge{}
& \BaseOstDirTemporal{}
& \BaseOstDirEst{}
& \BaseOstDistJudge{}
& \BaseOstDistTemporal{}
& \BaseOstDistEst{}\\
\rowcolor{MethodBlue}
\textbf{\method{}}
& \textbf{\CsaOst{}}
& \CsaOstPosJudge{}
& \CsaOstPosEst{}
& \CsaOstOriJudge{}
& \CsaOstOriEst{}
& \CsaOstExistJudge{}
& \CsaOstExistTemporal{}
& \CsaOstQuantity{}
& \CsaOstDiversity{}
& \CsaOstOrder{}
& \CsaOstDirJudge{}
& \CsaOstDirTemporal{}
& \CsaOstDirEst{}
& \CsaOstDistJudge{}
& \CsaOstDistTemporal{}
& \CsaOstDistEst{}\\
$\Delta$
& +\CsaOstTrainGain{}
& \DeltaOstPosJudge{}
& \DeltaOstPosEst{}
& \DeltaOstOriJudge{}
& \DeltaOstOriEst{}
& \DeltaOstExistJudge{}
& \DeltaOstExistTemporal{}
& \DeltaOstQuantity{}
& \DeltaOstDiversity{}
& \DeltaOstOrder{}
& \DeltaOstDirJudge{}
& \DeltaOstDirTemporal{}
& \DeltaOstDirEst{}
& \DeltaOstDistJudge{}
& \DeltaOstDistTemporal{}
& \DeltaOstDistEst{}\\
\bottomrule
\end{tabular}}
\end{table}

\stitle{Continuous state estimation.}
\textbf{On the scene-disjoint OST protocol, \method{} improves OST from
\BaseOst{} to \CsaOst{} (+\CsaOstTrainGain{} points).}
The $\Delta$ row of Table~\ref{tab:ost-main} is positive in all fifteen
subtypes. The
largest changes are Direction Estimation (\DeltaOstDirEst{}), Order
Judgement (\DeltaOstOrder{}), and Distance Temporal Localization
(\DeltaOstDistTemporal{}).  In short, gains are largest where an answer must be composed from
accumulated multi-turn state rather than the current view, showing that
\method{} substantially improves continuous state estimation over multi-round
exploration (RQ1).

\begin{table}[!htb]
\caption{\textbf{Complete UCS-Bench decomposition} over all 8,114
timestamp-causal questions (\%). Each group reports its aggregate (All) and
two subcategories; category names follow Appendix~\ref{app:protocol}.
Published rows are reference points under their reported evaluation
protocols; $\Delta$ is \method{} minus base.}
\label{tab:ucs-main}
\centering
\scriptsize
\setlength{\tabcolsep}{1.8pt}
\renewcommand{\arraystretch}{1.12}
\resizebox{\textwidth}{!}{%
\begin{tabular}{lcc*{13}{c}}
\toprule
& & & \multicolumn{3}{c}{Trajectory}
& \multicolumn{3}{c}{Position}
& \multicolumn{3}{c}{Proximity}
& \multicolumn{3}{c}{Category}
& Overall\\
\cmidrule(lr){4-6}
\cmidrule(lr){7-9}
\cmidrule(lr){10-12}
\cmidrule(lr){13-15}
Model & Size & Frames
& All & My & Obj.
& All & Ego & Rel.
& All & Dist. & Reach
& All & Cat. & Qty.
&\\
\midrule
\rowcolor{GroupGray}
\multicolumn{16}{l}{\textit{Representative official references}}\\
DeepSeek-V3 & -- & 50
& 42.0 & 41.0 & 44.8
& 33.2 & 35.4 & 31.6
& 37.7 & 45.9 & 31.2
& 32.3 & 36.8 & 30.3
& 35.6\\
GPT-5 & -- & 50
& 44.5 & 43.3 & 48.1
& 37.8 & 40.2 & 36.3
& 55.4 & 52.9 & 57.5
& 41.4 & 58.8 & 33.5
& 43.7\\
InternVideo2.5 & 8B & 64
& 44.2 & 43.9 & 45.2
& 39.4 & 42.4 & 37.5
& 52.7 & 49.1 & 55.5
& 42.2 & 53.0 & 37.4
& 43.8\\
Qwen3-VL-8B & 8B & 64
& 44.6 & 43.1 & 48.7
& 40.1 & 43.6 & 37.9
& 50.9 & 49.2 & 52.1
& 42.9 & 52.8 & 38.4
& 44.0\\
Doubao-Seed-1.6V & -- & 50
& 49.6 & 49.4 & 50.1
& 42.8 & 46.4 & 40.4
& 56.0 & 54.9 & 56.8
& 44.6 & 57.0 & 39.0
& 47.3\\
Qwen2.5-VL-32B & 32B & 64
& 50.3 & 49.4 & 53.0
& 40.9 & 43.1 & 39.5
& 55.4 & 50.4 & 59.3
& 46.0 & 56.4 & 41.4
& 47.3\\
Qwen3-VL-32B & 32B & 64
& 50.3 & 49.0 & 54.1
& 46.5 & 50.1 & 44.3
& 55.2 & 53.9 & 56.3
& 46.7 & 58.8 & 41.2
& 49.0\\
InternVL3-38B & 38B & 64
& 53.7 & 52.4 & 57.3
& 45.6 & 46.9 & 44.8
& 55.3 & 54.0 & 56.3
& 45.8 & 60.4 & 39.3
& 49.2\\
DirectMe (Qwen3-VL) & 8B & 1fps
& 54.1 & 53.0 & 57.2
& 47.6 & 50.2 & 46.0
& 58.0 & 55.7 & 59.8
& 46.4 & 53.7 & 43.1
& 50.5\\
Human & -- & --
& 91.7 & 90.9 & 94.1
& 87.6 & 84.0 & 89.8
& 96.8 & 95.0 & 98.2
& 90.0 & 94.3 & 88.1
& 91.0\\
\midrule
\rowcolor{GroupGray}
\multicolumn{16}{l}{\textit{Our Qwen3-VL-8B evaluation}}\\
Official Qwen3-VL-8B & 8B & 128
& \BaseUcsTrajectory{} & 47.49 & 51.49
& \BaseUcsEgo{} & 40.32 & 42.61
& \BaseUcsProximity{} & 46.12 & 52.14
& \BaseUcsObject{} & 53.50 & 39.60
& \BaseUcs{}\\
\rowcolor{MethodBlue}
\textbf{\method{}} & 8B & 128
& \CsaUcsTrajectory{} & \CsaUcsMyTrajectory{} & \CsaUcsObjectTrajectory{}
& \CsaUcsEgo{} & \CsaUcsEgoPosition{} & \CsaUcsRelativePose{}
& \CsaUcsProximity{} & \CsaUcsDistance{} & \CsaUcsReachability{}
& \CsaUcsObject{} & \CsaUcsObjectCategory{} & \CsaUcsQuantity{}
& \textbf{\CsaUcs{}}\\
$\Delta$ & &
& \DeltaUcsTrajectory{} & \DeltaUcsMyTrajectory{} & \DeltaUcsObjectTrajectory{}
& \DeltaUcsEgo{} & \DeltaUcsEgoPosition{} & \DeltaUcsRelativePose{}
& \DeltaUcsProximity{} & \DeltaUcsDistance{} & \DeltaUcsReachability{}
& \DeltaUcsObject{} & \DeltaUcsObjectCategory{} & \DeltaUcsQuantity{}
& +\CsaUcsTrainGain{}\\
\bottomrule
\end{tabular}}
\end{table}

\stitle{Timestamp-causal reasoning.}
\textbf{On UCS, \method{} improves from \BaseUcs{} to \CsaUcs{}
(+\CsaUcsTrainGain{} points), with positive transfer in all eight
subcategories.} The largest subcategory gains are on quantity
(\DeltaUcsQuantity{}), object
trajectory (\DeltaUcsObjectTrajectory{}), and observer trajectory
(\DeltaUcsMyTrajectory{}). These gains accompany categorical and metric
improvements under unchanged observation access (RQ1).

\subsection{Diagnostic Analysis (RQ2, RQ3)}
\label{sec:controlled-analysis}

\begin{table}[!htb]
\caption{\textbf{Diagnostic analysis of \method{}} (\%).
Panel (a): categorical-only, trajectory-only, and full \method{} on
Uniform-128 OVO-S and complete OST/UCS. \method{} uses the canonical model
from Tables~\ref{tab:ovos-main}--\ref{tab:ucs-main}; Mean averages the three benchmarks.
Panel (b): each alternative is compared with its corresponding \method{}
reference under the scope shown. Order averages two seed pairs;
the mass sweep uses adaptive OVO-S.}
\label{tab:controlled-analysis}
\centering
\small
\setlength{\tabcolsep}{2pt}
\renewcommand{\arraystretch}{1.10}
\resizebox{\textwidth}{!}{%
\begin{tabular}{lcccc@{\hspace{10pt}}lccc}
\toprule
\multicolumn{5}{c}{\textbf{(a) Single-family supervision}}
& \multicolumn{4}{c}{\textbf{(b) Mechanism controls}}\\
\cmidrule(lr){1-5}\cmidrule(lr){6-9}
Training & OVO-S & OST & UCS & Mean
& Control: alternative & Alt. & \method{} & Scope\\
\midrule
Official base & \BaseOvosUniform{} & \BaseOst{} & \BaseUcs{} & \BaseMean{}
& Format: \textsc{Canonical} & \CommonInterfaceMean{}
& \cellcolor{MethodBlue}\textbf{\NativeInterfaceMean{}} & OST/UCS\\
\textsc{Categorical-only} & \textbf{\CategoricalControlOvos{}}
& \CategoricalControlOst{} & \CategoricalControlUcs{} & \CategoricalControlMean{}
& Order: \textsc{Shifted} & \ShiftedTwoSeedMean{}
& \cellcolor{MethodBlue}\textbf{\ChronoTwoSeedMean{}} & OST\\
\textsc{Trajectory-only} & \TrajectoryControlOvos{}
& \TrajectoryControlOst{} & \TrajectoryControlUcs{} & \TrajectoryControlMean{}
& Mass: $\alpha=1.0$ & 60.92
& \cellcolor{MethodBlue}\textbf{61.20} & OVO-S/OST\\
\rowcolor{MethodBlue}
\textbf{\method{}} & \textbf{\CsaOvosUniform{}} & \textbf{\CsaOst{}}
& \textbf{\CsaUcs{}} & \textbf{\HeadlineMean{}}
& \cellcolor{white}Topology: 4 GPUs & \cellcolor{white}\ReplicaMean{}
& \cellcolor{MethodBlue}\HeadlineMean{} & \cellcolor{white}All three\\
\bottomrule
\end{tabular}}
\end{table}

\stitle{Key component analysis.}
\textbf{Single-family training shows uneven cross-readout transfer;
full \method{} gives the highest three-benchmark mean.}
In the single-family diagnostic of Table~\ref{tab:controlled-analysis}(a),
\textsc{Categorical-only} reaches \CategoricalControlOvos{} on OVO-S while
reducing OST to \CategoricalControlOst{}, whereas \textsc{Trajectory-only}
reaches \TrajectoryControlOvos{}/\TrajectoryControlOst{}/\TrajectoryControlUcs{}
on OVO-S/OST/UCS. The canonical \method{} model reaches
\CsaOvosUniform{}/\CsaOst{}/\CsaUcs{}, yielding the highest three-benchmark
mean of \HeadlineMean{}, compared with \CategoricalControlMean{} and
\TrajectoryControlMean{} for the two single-family variants. Source and task
coverage differ between the single-family variants, so this comparison
diagnoses cross-readout specialization rather than isolating answer interface
as a single causal factor (Appendix~\ref{app:single-family-control}).

\stitle{Comparison of joint-training objectives.}
With the joint data, chronological prefixes, native targets, and training
budget held fixed, ordinary token SFT obtains 51.8/60.7/49.8 on OVO-S/OST/UCS,
and natural turn weighting obtains 52.9/62.3/51.5
(Table~\ref{tab:joint-objectives}). \method{} improves all three endpoints,
with a mean of \HeadlineMean{} versus 54.10 and 55.57. This comparison supports the
balanced objective beyond joint data exposure under the tested setting.
Its OST advantage is 3.23 points over ordinary token SFT and 1.63 over natural
turn weighting; the corresponding OVO-S/UCS gains are 2.09/2.97 and 0.99/1.27.
The advantage over natural turn weighting repeats on every benchmark in each
of three paired seeds (Table~\ref{tab:objective-paired-seeds}). Mean paired
gains are $0.92\pm0.27$ on OVO-S, $1.43\pm0.19$ on OST, and $1.17\pm0.25$
on UCS (points, mean $\pm$ sample standard deviation).

\stitle{Comparison of answer formats.}
With fixed response-only readers and matched four-GPU runs, \textsc{Native} is higher on UCS
by \NativeInterfaceUcsDelta{} points and in the two-benchmark mean
(\NativeInterfaceMean{} versus \CommonInterfaceMean{}), while
\textsc{Canonical} is higher on OST by \CommonInterfaceOstDelta{} points.
This crossed result shows that answer format redistributes transfer rather
than improving every endpoint uniformly. We retain native targets to preserve
source task semantics without an additional answer-space transformation;
the control does not make native format a universally superior encoding.
The canonical wrapper preserves the original answer exactly. By
Corollary~\ref{cor:canonical} in Appendix~\ref{app:theory}, this is a lossless
serialization intervention, with unchanged target information.

\stitle{Comparison of training orders.}
Against \textsc{Shifted}, which keeps every turn block, target, and family
weight, chronological replay is ahead in both seed pairs, by
\OriginGapSeedOne{} and \OriginGapSeedTwo{} OST points (mean \OrderGain{}).
The shift changes both
the episode origin and the context preceding each target; the result supports
the temporal-origin alignment construction as a whole (RQ3). The
\CsaOstTrainGain{}-point OST improvement over the official base belongs to
the complete recipe, while this paired ordering intervention measures a
\OrderGain{}-point contribution from source chronology.

\stitle{Sensitivity analysis.}
Over a fixed replay pool, $\alpha=0.5$ gives the highest observed OVO-S--OST mean (61.20
versus 60.92 for $\alpha=1.0$), and retraining on four GPUs instead of eight
reproduces the three-benchmark mean within 0.34 points (RQ3;
Appendix~\ref{app:experimental-details}).

\begin{figure}[!t]
  \centering
  \includegraphics[height=5.05cm,trim=22 450 23 32,clip]{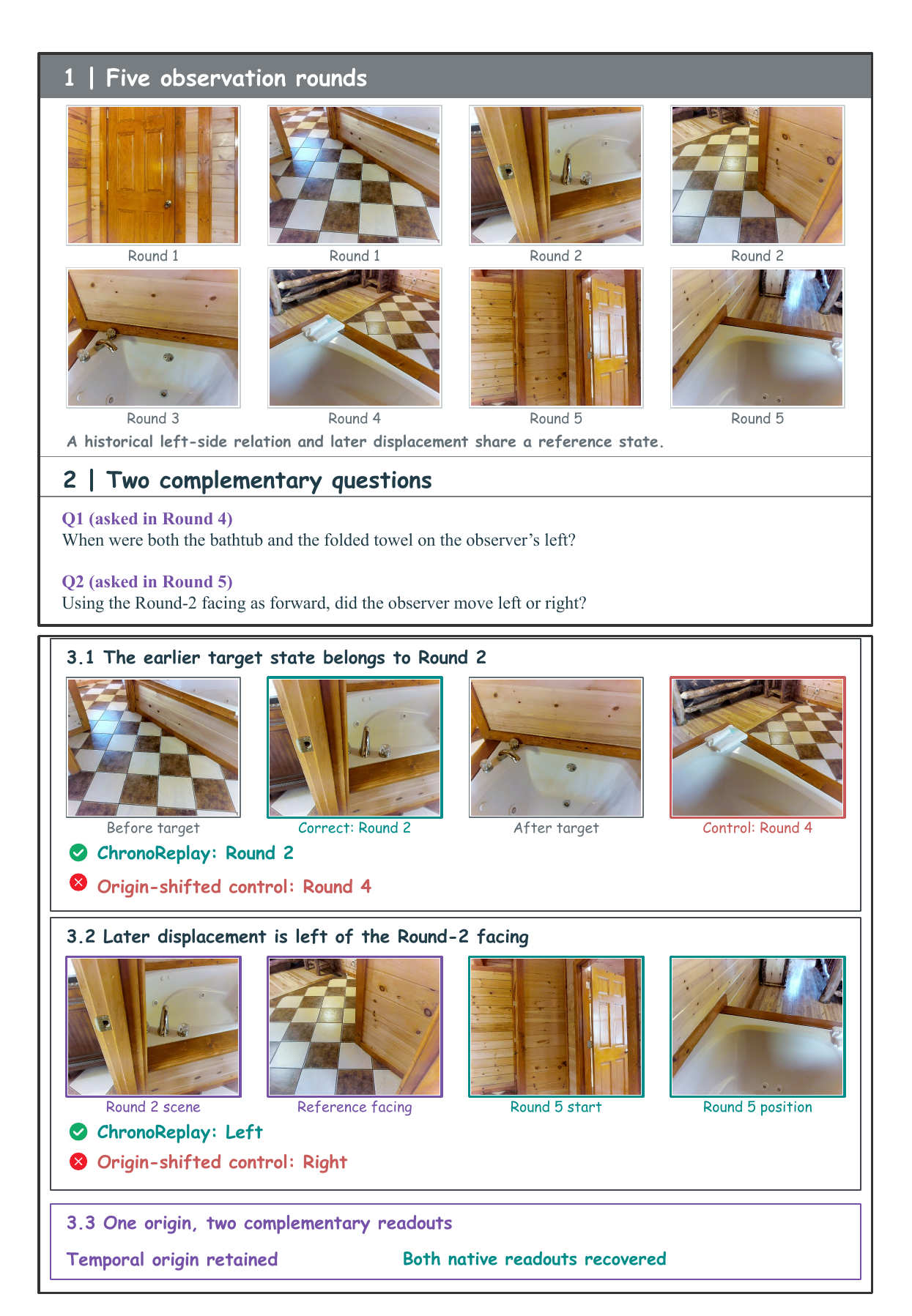}%
  \hspace{6pt}%
  \includegraphics[height=5.05cm,trim=22 13 23 414,clip]{figures/spatial_case.pdf}
  \caption{\textbf{One origin, two native readouts on an OST-Bench episode.}
  Left: the five observation rounds and two questions.  Q1 (Round~4): when
  were both the bathtub and the folded towel on the observer's left?
  Q2 (Round~5): using the Round-2 facing as forward, did the observer move
  left or right?  Right: \method{} answers Round~2 and Left, both correct,
  whereas the origin-shifted control (\textsc{Shifted}) returns Round~4 and
  Right under identical observations.  Teal marks annotated states, red the
  control's errors; the full-size episode is in
  Appendix~\ref{app:qualitative-case}.}
  \label{fig:case-main}
\end{figure}

\stitle{Qualitative analysis.}
Figure~\ref{fig:case-main} shows one episode whose two questions share a
reference state: a temporal retrieval and a displacement relative to an
earlier facing.  \method{} recovers both from the same origin, whereas
\textsc{Shifted} retrieves a later round and flips the displacement, the
failure the origin-shift control predicts (RQ3).
Appendix~\ref{app:qualitative-case} adds three more chronology cases, a
mirror-history comparison using historical preservation and trajectory
diagnostic models, and a fixed-observation L4 layout failure (RQ4).
The historical Preservation and Trajectory models used for qualitative
analysis are specified separately in the appendix.

\subsection{Observation Access as an Orthogonal Factor (RQ4)}
\label{sec:access-decomposition}

\begin{table}[!htb]
\caption{\textbf{Training--access decomposition on OVO-S} (\%).
Levels average their three family accuracies; Overall averages all twelve.
Summary rows separate training and access effects, using unrounded scores.}
\label{tab:ovos-system}
\centering
\small
\setlength{\tabcolsep}{5pt}
\begin{tabular*}{\textwidth}{@{\extracolsep{\fill}}llcccccc}
\toprule
Checkpoint & Access & L1 & L2 & L3 & L4 & Overall & Frames\\
\midrule
Official base & Uniform-128 & \BaseOvosLone{} & \BaseOvosLtwo{}
& \BaseOvosLthree{} & \BaseOvosLfour{} & \BaseOvosUniform{} & 128\\
\method{} & Uniform-128 & \CsaOvosUniformLone{} & \CsaOvosUniformLtwo{}
& \CsaOvosUniformLthree{} & \CsaOvosUniformLfour{}
& \CsaOvosUniform{} & 128\\
\rowcolor{MethodBlue}
\textbf{\method{}} & \textbf{Adaptive} & \CsaOvosLone{} & \CsaOvosLtwo{}
& \CsaOvosLthree{} & \CsaOvosLfour{} & \textbf{\CsaOvosSupport{}}
& \MeanFrames{}\\
\midrule
\multicolumn{2}{l}{Training effect (Uniform-128 $-$ base)}
& \TrainLone{} & \TrainLtwo{} & \TrainLthree{} & \TrainLfour{}
& +\CsaOvosTrainGain{} & --\\
\multicolumn{2}{l}{Access effect (adaptive $-$ Uniform-128)}
& \AccessLone{} & \AccessLtwo{} & \AccessLthree{} & \AccessLfour{}
& +\CsaOvosPolicyGain{} & --\\
\bottomrule
\end{tabular*}
\end{table}

\begin{samepage}
\stitle{Decomposition of the system score.}
\textbf{Adaptive access adds +\CsaOvosPolicyGain{} points beyond the
+\CsaOvosTrainGain{}-point training gain.} Table~\ref{tab:ovos-system}
separates the two effects under the same scoring rule. Training improves
all four levels, including L4 by \CsaOvosLfourGain{} points. Adaptive access
adds most on L1 (\AccessLone{}) and also improves L4 (\AccessLfour{}),
using \MeanFrames{} frames on average. L4 remains the hardest level.
Appendix~\ref{app:access-effects} reports the base-access and Uniform-81 controls.
\par
\end{samepage}

%% file: sections/06_conclusion.tex
\section{Conclusion}

\method{} replays heterogeneous spatial supervision in source chronology,
retains native targets, and improves OVO-S, OST, and UCS together
under matched observation access. We call the uneven
cross-readout transfer observed in single-family training \emph{interface
lock-in}, an empirical pattern rather than an isolated internal mechanism.
The single-family comparison retains the tradeoff: categorical-only
training favors OVO-S, trajectory-only training gives little OST Overall
gain, and full \method{} has the highest three-benchmark mean. Objective controls
show further gains beyond ordinary joint SFT. The full recipe
gains \CsaOstTrainGain{} OST points over the base; the paired cyclic-shift
control isolates a smaller, repeatable \OrderGain{}-point contribution from
preserving source chronology as a whole. Native and canonical answer formats
redistribute gains across UCS and OST rather than establishing a universally
best encoding. OVO-S improves across all four levels, including
+\CsaOvosLfourGain{} points on L4, which remains the hardest subset. These findings
support ChronoReplay as a simple recipe for balanced cross-interface
transfer without assigning its full gain to one component or claiming
dominance on every spatial subcategory.

%% file: sections/09_theory.tex
\section{Theoretical Analysis}
\label{app:theory}

This appendix formalizes the three arguments that motivate \method{}
(Sections~\ref{sec:native-refinement}, \ref{sec:origin-aligned-trajectories},
and \ref{sec:family-balanced-learning}).  Each result is stated at the
population level with an unrestricted readout, so it describes what the
objective asks of the representation; it does not certify that finite-data
token optimization attains it. The controls in
Section~\ref{sec:controlled-analysis} examine the empirical relevance of
these properties.
Throughout, $C$ is a causal prefix drawn from the replay pool, $Q$ its query,
$\kappa$ the native interface, $Y^\kappa=A_\kappa(C,Q)$ the native target,
$Z=F_\theta(C)$ the representation, and $H(\cdot\mid\cdot)$ and
$I(\cdot;\cdot\mid\cdot)$ denote conditional entropy and conditional mutual
information.

\subsection{Native Readouts as Sufficiency Constraints}

\begin{proposition}[Native risk and query-conditioned prediction]
\label{prop:sufficiency}
For any representation $F_\theta$ and any interface $\kappa$, the population
risk of Eq.~\ref{eq:native-interface-risk} satisfies
\begin{equation}
\inf_{G_\kappa}\mathcal R_\kappa(F_\theta,G_\kappa)
= H(Y^\kappa\mid Z,Q)
= H(Y^\kappa\mid C,Q) + I(Y^\kappa; C\mid Z,Q),
\label{eq:risk-decomposition}
\end{equation}
where the infimum is over all conditional distributions of $Y^\kappa$ given
$(Z,Q)$. For each query $q$ in the support of $Q$,
$H(Y^\kappa\mid Z,Q=q)=0$ implies perfect prediction from $(Z,q)$
on that conditional support.
\end{proposition}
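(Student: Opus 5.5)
The argument has three parts, taken in the order of the statement: the variational identity for log-loss, a chain-rule decomposition of mutual information, and the zero-entropy implication.

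\emph{First equality.} Fix $F_\theta$, so that $Z=F_\theta(C)$ is a measurable function of $C$. For any conditional distribution $G_\kappa(\cdot\mid z,q)$, condition on $(Z,Q)$ and write
\[
\mathcal R_\kappa(F_\theta,G_\kappa)
=H(Y^\kappa\mid Z,Q)
+\mathbb E_{Z,Q}\Bigl[\mathrm{KL}\bigl(P_{Y^\kappa\mid Z,Q}\,\big\|\,G_\kappa(\cdot\mid Z,Q)\bigr)\Bigr].
\]
This is Gibbs' inequality applied pointwise. The KL term is nonnegative, and it vanishes when $G_\kappa$ is the true conditional law $P_{Y^\kappa\mid Z,Q}$. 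That law belongs to the admissible class, because the infimum runs over all conditional distributions. Hence the infimum is attained and equals $H(Y^\kappa\mid Z,Q)$.

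For continuous metric targets, $H$ would be differential entropy. I would state the result for discrete (serialized token) targets, which is what the autoregressive decoder actually emits. This caveat is the only real subtlety in this step.

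\emph{Second equality.} I would expand $I(Y^\kappa;C,Z\mid Q)$ by the chain rule in two orders:
\[
I(Y^\kappa;C,Z\mid Q)=I(Y^\kappa;Z\mid Q)+I(Y^\kappa;C\mid Z,Q)=I(Y^\kappa;C\mid Q)+I(Y^\kappa;Z\mid C,Q).
\]
The last term vanishes because $Z$ is a deterministic function of $C$, so $H(Z\mid C,Q)=0$. Substituting $I(Y^\kappa;\cdot\mid Q)=H(Y^\kappa\mid Q)-H(Y^\kappa\mid\cdot,Q)$ on both sides gives
\[
H(Y^\kappa\mid Z,Q)=H(Y^\kappa\mid C,Q)+I(Y^\kappa;C\mid Z,Q).
\]
The same identity also follows directly from $H(Y^\kappa\mid Z,Q)-H(Y^\kappa\mid C,Z,Q)=I(Y^\kappa;C\mid Z,Q)$ together with $H(Y^\kappa\mid C,Z,Q)=H(Y^\kappa\mid C,Q)$, which holds because $Z$ is determined by $C$. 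I would present this shorter version.

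If the paper intends $F_\theta$ to be possibly stochastic, determinism would be replaced by the Markov chain $Y^\kappa - (C,Q) - Z$. The identity still holds in that case, since the Markov property alone gives $I(Y^\kappa;Z\mid C,Q)=0$. I would note this in a remark.

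\emph{Zero-entropy implication.} Fix $q$ in the support of $Q$. Write
\[
H(Y^\kappa\mid Z,Q=q)=\mathbb E_{Z\mid Q=q}\bigl[H(Y^\kappa\mid Z=z,Q=q)\bigr].
\]
This is an average of nonnegative terms, so if it is zero, then $H(Y^\kappa\mid Z=z,Q=q)=0$ for almost every $z$. For a discrete variable, zero entropy means the conditional law is a point mass at some value $\phi_q(z)$. Consequently $Y^\kappa=\phi_q(Z)$ almost surely on the conditional support, and the readout $G_\kappa(\cdot\mid z,q)=\delta_{\phi_q(z)}$ predicts perfectly.

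The step I expect to need the most care is measurability of $\phi_q$ and the null-set qualifiers. I would handle these by defining $\phi_q(z)=\arg\max_y P(Y^\kappa=y\mid Z=z,Q=q)$, which is measurable because $\mathcal Y_\kappa$ is countable. This also explains why the statement is phrased "on that conditional support."
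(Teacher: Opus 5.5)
Your proposal is correct and follows the paper's proof essentially step for step. The first equality comes from the pointwise KL (Gibbs) decomposition. The second comes from $H(Y^\kappa\mid C,Z,Q)=H(Y^\kappa\mid C,Q)$ together with the chain rule; your ``shorter version'' is exactly the paper's argument. The final claim is the point-mass reading of zero conditional entropy. Your caveats are added rigor the paper leaves implicit, not a different route: you restrict to discrete serialized targets so that zero entropy really forces a point mass, and you build a measurable $\phi_q$ via an $\arg\max$ over the countable target space.
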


\begin{proof}
For fixed $(Z,Q)$,
$\mathbb E[-\log G_\kappa(Y^\kappa\mid Z,Q)]
= H(Y^\kappa\mid Z,Q)+\mathbb E\,\mathrm{KL}\big(P(Y^\kappa\mid Z,Q)\,\|\,G_\kappa(\cdot\mid Z,Q)\big)$,
and the divergence vanishes exactly when $G_\kappa$ equals the true
conditional, which is admissible for an unrestricted readout; this gives the
first equality.  Since $Z$ is a function of $C$,
$H(Y^\kappa\mid C,Q)=H(Y^\kappa\mid C,Z,Q)$, and the chain rule gives
$H(Y^\kappa\mid Z,Q)-H(Y^\kappa\mid C,Z,Q)=I(Y^\kappa;C\mid Z,Q)\ge 0$, which
is the second equality. Zero conditional entropy means that the conditional
target distribution is a point mass almost surely, yielding the
query-conditioned prediction statement.
\end{proof}

We use this result to motivate supervision across heterogeneous query
interfaces; cross-interface transfer is evaluated empirically in
Section~\ref{sec:experiments}.

\begin{corollary}[Lossless re-encoding preserves target information]
\label{cor:canonical}
Let $\tilde Y=g(Y^\kappa)$, where $g$ is invertible on the target support.
Then for every representation,
$H(\tilde Y\mid Z,Q)=H(Y^\kappa\mid Z,Q)$ and
$I(\tilde Y;C\mid Z,Q)=I(Y^\kappa;C\mid Z,Q)$.
The two encodings impose the same target-sufficiency requirement at the
population level.
\end{corollary}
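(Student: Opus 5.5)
The plan is to show that an invertible map $g$ leaves every conditional distribution involved in Proposition~\ref{prop:sufficiency} unchanged up to relabeling of outcomes. Conditional entropy depends only on the multiset of conditional probabilities, not on the labels attached to them, so both identities should follow directly.

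\textbf{Step 1: the entropy identity.} Fix a realization $(z,q)$ of $(Z,Q)$. Since $g$ is injective on the support of $Y^\kappa$, the events $\{\tilde Y=g(y)\}$ and $\{Y^\kappa=y\}$ coincide for every $y$ in that support. Hence
\[
P(\tilde Y=g(y)\mid Z=z,Q=q)=P(Y^\kappa=y\mid Z=z,Q=q)
\]
for every such $y$. Consequently $H(\tilde Y\mid Z=z,Q=q)=H(Y^\kappa\mid Z=z,Q=q)$, because the sum $-\sum p\log p$ runs over the same collection of probabilities. Averaging over $(Z,Q)$ gives $H(\tilde Y\mid Z,Q)=H(Y^\kappa\mid Z,Q)$.

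\textbf{Step 2: the mutual-information identity.} Step~1 used nothing about $Z$ beyond its role as a conditioning variable, so the same argument applies with $(C,Z,Q)$ in place of $(Z,Q)$. This gives $H(\tilde Y\mid C,Z,Q)=H(Y^\kappa\mid C,Z,Q)$. From the chain-rule step in the proof of Proposition~\ref{prop:sufficiency},
\[
I(\cdot\,;C\mid Z,Q)=H(\cdot\mid Z,Q)-H(\cdot\mid C,Z,Q),
\]
so the two mutual informations are equal. There is an alternative route: $\tilde Y$ and $Y^\kappa$ generate the same $\sigma$-algebra, so $I(\tilde Y;C\mid Z,Q)=I(Y^\kappa;C\mid Z,Q)$ holds because mutual information is invariant under bijections of either argument.

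\textbf{Step 3: the sufficiency requirement.} By Proposition~\ref{prop:sufficiency}, the optimal native risk under either encoding equals the same conditional entropy. That entropy decomposes into the same irreducible term $H(\cdot\mid C,Q)$ plus the same representation gap $I(\cdot\,;C\mid Z,Q)$. Hence a representation $F_\theta$ meets the requirement for one encoding exactly when it meets it for the other.

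\textbf{Main obstacle.} The only delicate point is measure-theoretic. If $Y^\kappa$ has a continuous component, such as a metric value, differential entropy is \emph{not} invariant under general invertible maps: it shifts by $\mathbb E\log|g'|$. I would handle this by treating targets as discrete token sequences, which matches the paper's autoregressive decoder and its finite annotation precision, so Step~1 applies verbatim. I would also remark that the mutual-information identity holds even without this restriction, since mutual information is invariant under measurable bijections.
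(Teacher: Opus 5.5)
Your proof is correct, but it reaches the identities by a different mechanism than the paper. The paper uses a two-sided sandwich. Because $Y^\kappa$ and $\tilde Y$ are each deterministic functions of the other, conditional entropy cannot increase in either direction. The conditional data-processing inequality, applied both ways, then equates the mutual-information terms. You instead prove the entropy identity by exact computation: the conditional law of $\tilde Y$ given $(Z,Q)$ is a relabeling of that of $Y^\kappa$. You then obtain the mutual-information identity from the chain rule, by repeating that computation with $(C,Z,Q)$ as the conditioning variable.

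Your route gives a stronger structural statement. Every label-invariant functional of the conditional law is preserved, not just entropy, so e.g.\ the optimal $0$--$1$ error is also unchanged. The paper's route uses only inequalities and never subtracts entropies, so it does not need $H(Y^\kappa\mid Z,Q)<\infty$. Your chain-rule difference does need this. That is harmless for bounded-length token sequences, and your bijection-invariance alternative avoids it anyway.

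Your caveat about continuous targets is well placed and applies to the paper's proof too. The step ``conditional entropy cannot increase under a deterministic mapping'' fails for differential entropy: scaling by $2$ adds $\log 2$. So the paper also tacitly treats targets as discrete, and your restriction to token sequences makes that assumption explicit.
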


\begin{proof}
Each target is a deterministic function of the other. Conditional entropy
cannot increase under either deterministic mapping, giving equality.
Applying the conditional data-processing inequality in both directions
likewise gives equality of the mutual information terms.
\end{proof}

Our canonical control wraps the original answer string in a JSON
\texttt{value} field, so the target remains recoverable exactly. It tests
transfer under a lossless change in output serialization. The crossed
OST/UCS result in Table~\ref{tab:controlled-analysis} therefore concerns
finite-model training and readout behavior; it cannot be explained by a loss
of target information in the wrapper. Equality at the population level does
not imply identical token losses or optimization trajectories.

\subsection{Cyclic Shifts Change the Conditioning Context}

Let $O$ denote the identity of an episode's first turn block, $C_j$ the
legal chronological context for target $j$, and $C'_j$ its preceding context
after the full episode is cyclically shifted
(Appendix~\ref{app:origin-shift-construction}). The shift preserves each
block's local content, but $C'_j$ can omit earlier blocks and include blocks
that originally occurred later. Knowing $O$ therefore need not recover
$C_j$ from $C'_j$.

\begin{proposition}[Risk under context ambiguity]
\label{prop:origin}
Let $Y=A(C_j,Q)$ be a deterministic native target. With an unrestricted
readout, the population log-loss floor under shifted conditioning is
\begin{equation}
\inf_G\mathbb E[-\log G(Y\mid C'_j,Q)]
=H(Y\mid C'_j,Q)
=I(Y;O\mid C'_j,Q)+H(Y\mid C'_j,O,Q).
\label{eq:shifted-context-risk}
\end{equation}
This floor exceeds $H(Y\mid C_j,Q)=0$ if, on a set of shifted inputs
$(C'_j,Q)$ of positive probability, compatible chronological histories
induce at least two distinct targets with positive conditional probability.
\end{proposition}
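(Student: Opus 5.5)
The plan mirrors the proof of Proposition~\ref{prop:sufficiency}, with the shifted context $C'_j$ in place of the representation $Z$. It has three steps: the first equality, the chain-rule decomposition, and the positivity condition.

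\textbf{First equality.} For any readout $G$, the standard decomposition gives
\[
\mathbb E[-\log G(Y\mid C'_j,Q)] = H(Y\mid C'_j,Q) + \mathbb E\,\mathrm{KL}\bigl(P(Y\mid C'_j,Q)\,\|\,G(\cdot\mid C'_j,Q)\bigr).
\]
The KL term is nonnegative. It vanishes when $G$ is the true conditional, and an unrestricted readout may choose that conditional. This is exactly the first step of Proposition~\ref{prop:sufficiency}.

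\textbf{Second equality.} This is the chain rule for conditional entropy with $O$ as the extra variable:
\[
H(Y\mid C'_j,Q) = I(Y;O\mid C'_j,Q) + H(Y\mid C'_j,O,Q).
\]
No structural assumption is needed. Both terms are nonnegative, so each lower-bounds the floor separately.

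\textbf{Positivity condition.} The reference value $H(Y\mid C_j,Q)=0$ holds because $Y=A(C_j,Q)$ is a deterministic function of $(C_j,Q)$.

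For the strict inequality, let $S$ be the set of shifted inputs $(c',q)$ of positive probability on which the conditional law of $Y$ places positive mass on at least two distinct values. On each such input, the conditional law $P(Y\mid C'_j=c',Q=q)$ is not a point mass, so its entropy is strictly positive. Then
\[
H(Y\mid C'_j,Q) \ge \mathbb E\bigl[\1_S\, H(Y\mid C'_j=c',Q=q)\bigr] > 0,
\]
since the integrand is nonnegative everywhere and strictly positive on a set of positive measure.

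\textbf{Main obstacle.} The delicate point is making ``compatible chronological histories induce two distinct targets'' precise as a statement about the conditional law of $Y$ given $(C'_j,Q)$. I would phrase it as follows. The conditional distribution of $C_j$ given $(C'_j,Q)$ charges histories $c_j^{(1)}$ and $c_j^{(2)}$ with $A(c_j^{(1)},q)\neq A(c_j^{(2)},q)$, each with positive conditional probability. Pushing this law forward through the deterministic map $A(\cdot,q)$ gives a law on $Y$ with at least two atoms.

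In the continuous case, measurability requires that the preimages of two distinct target values each have positive conditional probability. This is exactly the hypothesis as stated, so no further regularity assumption is needed.
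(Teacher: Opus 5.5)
Your proposal is correct and follows the paper's proof essentially step for step. Both use the KL decomposition of expected log loss, with the true conditional $P(Y\mid C'_j,Q)$ as the admissible minimizer; the chain rule with $O$ as the auxiliary variable; and positivity of $H(Y\mid C'_j,Q)$ as an average of pointwise entropies that are strictly positive on a positive-probability set, with $H(Y\mid C_j,Q)=0$ following from determinism. Your restatement of the ambiguity hypothesis, as two preimages $A(\cdot,q)^{-1}(y_1)$ and $A(\cdot,q)^{-1}(y_2)$ each charged by the conditional law of $C_j$, makes precise what the paper leaves implicit; like the paper, it presumes discrete targets such as token strings, which is what makes $H(Y\mid C_j,Q)=0$ meaningful.
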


\begin{proof}
Expected log loss is conditional entropy plus the expected KL divergence
from the true conditional distribution to $G$. The divergence vanishes at
$G=P(Y\mid C'_j,Q)$, giving the first equality. The second is the chain
rule for conditional mutual information. Under the stated ambiguity
condition, the target distribution has positive entropy on a set of
positive probability, so $H(Y\mid C'_j,Q)>0$. Chronological conditioning
has zero entropy because $Y$ is a function of $(C_j,Q)$.
\end{proof}

The residual term $H(Y\mid C'_j,O,Q)$ captures ambiguity that remains even
when the origin is known, including ambiguity from missing history.
Cyclic shifts need not satisfy the ambiguity condition: later observations
introduced into a prefix may preserve or reveal the answer. The proposition
therefore gives a conditional account of context corruption; it does not
imply a strict penalty for every shift or determine downstream accuracy.
The measured advantage of chronological replay (\OrderGain{} OST points
averaged over two seeds in Table~\ref{tab:origin-seed-details}) supports
preserving source order and legal causal prefixes together, without isolating
a latent origin representation.

\subsection{What Family Balance Does and Does Not Control}

\begin{proposition}[Invariances of the balanced objective]
\label{prop:balance}
Let $\mathcal L_P$ and $\mathcal L_T$ be the family risks of
Eq.~\ref{eq:family-risks} and
$\mathcal L_{\mathrm{replay}}=(1-\lambda)\mathcal L_P+\lambda\mathcal L_T$
with $\lambda=\alpha/(1+\alpha)$.  Then:
(i) replacing $\mathcal D_P$ or $\mathcal D_T$ by a family with the same
within-family empirical distribution but a different number of sequences or
supervised turns leaves $\nabla_\theta\mathcal L_{\mathrm{replay}}$
unchanged, while $\alpha$ sets the explicit ratio of the two family weights;
(ii) $\mathcal L_T$ is invariant to whether the supervised transitions of an
episode are grouped into one sequence or presented as separate examples,
provided each transition keeps its causal prefix;
(iii) any common positive rescaling of all example weights, such as the one
that sets the mean weight to one (Appendix~\ref{app:experimental-details}),
leaves the minimizers of $\mathcal L_{\mathrm{replay}}$ and the direction of
its gradient unchanged.
\end{proposition}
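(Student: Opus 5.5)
The plan is to reduce all three claims to one observation: each family risk in Eq.~\ref{eq:family-risks} is a weighted average, so it equals an expectation under a normalized empirical measure. For (i), write $\mathcal L_P=\mathbb E_{r\sim\hat\mu_P}[\ell_r^P]$, where $\hat\mu_P(r)=w_r/\sum_{r'}w_{r'}$. Similarly write $\mathcal L_T=\mathbb E_{(i,j)\sim\hat\mu_T}[\ell^T_{ij}]$, with $\hat\mu_T$ uniform on $\mathcal T$. We interpret ``same within-family empirical distribution'' as equality of these normalized measures over (prefix, query, target, weight) configurations. Concretely, duplicating every example $k$ times, or any replacement that induces the same $\hat\mu$, leaves the numerator and denominator scaled by a common factor. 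Then $\mathcal L_P$, $\mathcal L_T$, and hence $\mathcal L_{\mathrm{replay}}$ and $\nabla_\theta\mathcal L_{\mathrm{replay}}$, are unchanged as functions of $\theta$, since the gradient is linear in the measure. The second half of (i) is a direct computation: the two family coefficients are $1-\lambda=1/(1+\alpha)$ and $\lambda=\alpha/(1+\alpha)$, so their ratio is exactly $\alpha$. This ratio is independent of $|\mathcal D_P|$ and $|\mathcal T|$.

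For (ii), the plan is to note that $\mathcal L_T$ is indexed by transitions $(i,j)\in\mathcal T$, not by sequences. The per-transition loss $\ell^T_{ij}$ is the native (length-normalized) loss of target $y_{ij}$ given the prefix $c_{ij}$ and query $q_{ij}$. Under causal masking in a grouped sequence, the logits at the answer tokens of turn $j$ depend only on tokens preceding them, and these are exactly $c_{ij}$ together with the preceding interaction history. So grouping and splitting produce the same multiset $\{\ell^T_{ij}\}$ and the same count $|\mathcal T|$. I expect this to be the main obstacle, because it is the one step that depends on implementation facts rather than algebra. The argument requires that the grouped sequence contain no tokens outside $c_{ij}$ before turn $j$'s answer. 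It also requires that the normalization be per turn rather than per sequence. I would state these as the hypothesis ``each transition keeps its causal prefix'' and cite the token normalization described in Appendix~\ref{app:experimental-details}.

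For (iii), rescale $w_r\mapsto cw_r$ with $c>0$; if trajectory turns carry weights, rescale them the same way. Then $\mathcal L_P$ is exactly invariant, because $c$ cancels in the ratio, and likewise for $\mathcal L_T$. If instead the weights enter unnormalized in a serialized sum, the objective becomes $c\,\mathcal L_{\mathrm{replay}}$. Its minimizers are unchanged and its gradient is scaled by $c>0$, so the gradient direction is preserved. In either reading the claim follows, and I would present both cases to cover the mean-one normalization.
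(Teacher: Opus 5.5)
Your proposal is correct and follows the paper's proof essentially step for step. For (i), the weighted-mean (normalized empirical measure) observation removes dependence on the number of terms, and the ratio $\lambda/(1-\lambda)=\alpha$ is computed directly. For (ii), $\mathcal L_T$ is a plain average over transitions, each computed from its own prefix $c_{ij}$, so grouping only changes batching; you also state the causal-masking and per-turn normalization hypotheses explicitly. For (iii), a common factor $c>0$ either cancels before normalization or scales the whole objective after it. One point where you are slightly more careful than the paper: you define the "same empirical distribution" over (prefix, query, target, weight) configurations rather than over loss values $(\ell,w)$, which is what actually makes the gradient invariant as a function of $\theta$.
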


\begin{proof}
(i) Each family risk is a weighted mean, so it depends only on the empirical
distribution of $(\ell,w)$ within the family and not on the number of terms;
the same holds for its gradient.  (ii) By Eq.~\ref{eq:family-risks},
$\mathcal L_T$ is the plain average of $\ell_{ij}^T$ over all supervised
transitions $(i,j)\in\mathcal T$, and each $\ell_{ij}^T$ is computed from the
prefix $c_{ij}$ alone (Eq.~\ref{eq:appendix-native-token-loss}); grouping
changes only how the same terms are batched.  (iii) A common factor $s>0$ on
all weights cancels between the numerator and denominator of each family
risk when applied before normalization, and multiplies the objective by $s$
when applied after; in either case the arg-min and the gradient direction are
unchanged.
\end{proof}

Proposition~\ref{prop:balance} is why the family-mass sweep of
Table~\ref{tab:family-mass-sweep} is a sweep over a single scalar: with
normalization in place, changing $\alpha$ changes the relative influence of
the two families and nothing else.  It also makes explicit what balance does
not control: coverage of states within the trajectory family still grows with
episode length, and no choice of $\alpha$ recovers information that a
supervision family never supplied, which is the content of
Proposition~\ref{prop:sufficiency}.

%% file: sections/08_appendix.tex
\section{Evaluation and Attribution Protocol}
\label{app:protocol}

\subsection{Attribution Framework}

Performance depends jointly on the trained model and its observation access.
Let $J(\theta,\pi)$ denote benchmark performance with model
parameters $\theta$ and observation policy $\pi$.  We separate the two effects
as
\begin{equation}
\Delta_{\mathrm{train}}
=
J(\theta^\star,\pi_0)-J(\theta_0,\pi_0),
\qquad
\Delta_{\mathrm{access}}
=
J(\theta^\star,\pi_1)-J(\theta^\star,\pi_0),
\label{eq:attribution-effects}
\end{equation}
where $\theta_0$ is the official base model, $\theta^\star$ is the model
trained with \method{}, $\pi_0$ is the standard observation policy, and
$\pi_1$ is adaptive access (Appendix~\ref{app:temporal-support}).

The first contrast changes the trained model under identical observation access.
The second changes observation access while holding the model parameters fixed.
The complete system evaluates $J(\theta^\star,\pi_1)$, while the two controlled
contrasts separate the training and access effects.

We examine design choices through matched training interventions. The
origin-shift intervention preserves the same multiset of turn blocks and
supervision weights while changing their episode-level chronology and the
accumulated context preceding each target. Initialization and optimization
settings are held fixed. The target-interface
experiment changes the answer representation while preserving the causal
histories and training configuration. These comparisons probe sensitivity
to the corresponding design choices of \method{}.

\subsection{Benchmark Protocols}

\stitle{OVO-S-Bench.}
Our fixed OVO-S evaluation uses only observations available by the query
boundary.  Uniform access to 128 frames defines $\pi_0$ and is used to measure
$\Delta_{\mathrm{train}}$.  Adaptive access defines $\pi_1$ for the complete
system and selects one causal observation set before answer generation.
All our OVO-S main results and component controls use the same scoring rule:
family accuracy is $a_f=100k_f/N_f$ and Overall is
$\frac{1}{12}\sum_{f=1}^{12}a_f$. Each level averages its three families.
We aggregate the unrounded fractions and round only the reported result.
Table~\ref{tab:ovos-counts} gives the counts for the main results. The pooled accuracies
are 50.91, 55.69, and 60.88 for Base, Uniform-128, and Adaptive, respectively;
these are distinct from the reported family means of 48.98, 53.89, and 57.98.

Published external rows are quoted as references under their reported protocols.

\begin{table}[H]
\caption{Correct answers underlying the OVO-S main results.
Uniform-128 and Adaptive denote \method{}; Base uses Uniform-128.}
\label{tab:ovos-counts}
\centering
\small
\begin{tabular}{lrrrr}
\toprule
Family & $N_f$ & Base & Uniform-128 & Adaptive\\
\midrule
1.1 & 167 & 96 & 105 & 116\\
1.2 & 194 & 117 & 125 & 138\\
1.3 & 268 & 119 & 130 & 164\\
2.1 & 198 & 108 & 117 & 119\\
2.2 & 168 & 79 & 87 & 98\\
2.3 & 147 & 103 & 109 & 113\\
3.1 & 150 & 89 & 98 & 103\\
3.2 & 63 & 17 & 21 & 22\\
3.3 & 66 & 41 & 44 & 45\\
4.1 & 131 & 26 & 32 & 35\\
4.2 & 90 & 55 & 61 & 63\\
4.3 & 53 & 13 & 15 & 16\\
\midrule
Total & 1,695 & 863 & 944 & 1,032\\
\bottomrule
\end{tabular}
\end{table}

The access policy uses only the visible semantics of the query.  Its routing
rules and frame construction remain fixed throughout evaluation.  The
controlled comparison with uniform access to 81 frames measures the effect of
temporal allocation at approximately equal visual cost.

\stitle{OST-Bench.}
OST conversations are replayed in their original turn order.  At each turn,
the model receives the observations and interaction history available at that
point in the episode.  Evaluation uses a fixed split with disjoint training
and test scenes, containing 8,351 turns in total and 7,814 turns scored by the
official evaluator.

We report the complete official taxonomy: agent state, visible agent
information, spatial relations between the agent and objects, judgement,
temporal localization, counting, and estimation.  The detailed table further
decomposes these categories into the fifteen official question types.

\stitle{UCS-Bench.}
We evaluate all 8,114 questions through the official interface.  Each question
is conditioned on observations available at its associated timestamp.  The
four official categories are Trajectory \& Movement, Position \& Orientation,
Proximity \& Reachability, and Category \& Quantity, abbreviated as
Trajectory, Position, Proximity, and Category in our tables. Each category
has two subcategories: the twelve displayed scores comprise four category
aggregates and eight subcategory scores \citep{wang2026keep}.
All compared Qwen3-VL-8B
checkpoints use the same observation and scoring protocol.

\subsection{Scope of Comparison}

Published systems provide external reference points under their reported
observation settings.  Attribution claims are based on matched Qwen3-VL-8B
comparisons. Post-training is evaluated under fixed observation
access, adaptive access is evaluated with fixed model parameters, and method
components are evaluated through matched training interventions.  This
separation determines which conclusions can be drawn from each experimental
block.

\FloatBarrier

\section{Replay Supervision}
\label{app:replay-data}

\subsection{Composition}

\method{} combines two supervision families. The single-turn preservation
pool contains 30,635 categorical and 2,545 numerical examples. The trajectory
pool contains ordered episodes with metric, counting, judgement, and temporal
targets. The families differ in source and interaction structure as well as
readout composition.

\begin{table}[H]
\caption{Composition and relative loss mass of the Causal Replay pool at
$\alpha=0.5$. A supervised state is one causal prefix with a native target.
Mass denotes the sum of training weights, normalized over the whole pool.}
\label{tab:replay-pool}
\centering
\small
\setlength{\tabcolsep}{4pt}
\begin{tabular*}{\textwidth}{@{\extracolsep{\fill}}llrrc}
\toprule
Family & Targets & Sequences & Supervised states & Loss mass \\
\midrule
Preservation & Categorical & 30,635 & 30,635 & $1/3$\\
 & Numerical & 2,545 & 2,545 & $1/3$\\
Trajectory & Mixed spatial & \TrajectoryRows{} & \TrajectoryTurns{} & $1/3$\\
\bottomrule
\end{tabular*}
\end{table}

The complete data contain \ReplayRows{} serialized sequences.  The trajectory
family consists of \TrajectoryRows{} episodes and
\TrajectoryTurns{} supervised states.  An episode may also contain observation
turns that establish the spatial context for later targets.  These turns enter
the causal prefix and receive no direct loss.

\stitle{Preservation sources and weights.}
The categorical pool comprises 9,000 source-annotated question--answer pairs,
each presented with dense, contextual, and long-context observations
(27,000 rows), plus 3,635 source-verified categorical examples under uniform causal
observation access. The numerical pool comprises 945 ARKitScenes
camera-rotation targets \citep{baruch2021arkitscenes} and 800 targets each for camera displacement and
camera--object distance from corrected camera-geometry records.
The trajectory pool uses the released OST training episodes.

Let $s=13{,}187/12{,}635$. Each of the three views of a categorical pair has
weight $s/3$, and each of the remaining 3,635 categorical examples has weight
$s$. Every numerical example has weight $13{,}187/2{,}545$. A trajectory
episode with $m_i$ supervised turns has serialized weight
$13{,}187\,m_i/45{,}006$ and averages its supervised turn losses.
Thus the categorical, numerical, and trajectory weight sums are each
13,187, and the mean serialized-example weight is one. These are loss
coefficients; they do not assert equal measured gradient norms.

The preservation and trajectory families retain their original questions and
answer spaces.  Categorical targets preserve candidate semantics, metric
targets preserve magnitude and units, and temporal targets preserve event
indices or timestamps.  \method{} therefore combines the two families
through a shared model while maintaining the distinctions supplied by each
source.

\subsection{Episode Construction}

Each trajectory is replayed in its original temporal order.  At turn $j$, the
model receives the observations available by that turn and all preceding
interaction:
\begin{equation}
c_{ij}
=
\left(
h_{ij};
q_{i1},y_{i1},\ldots,q_{i,j-1},y_{i,j-1}
\right).
\label{eq:serialized-causal-context}
\end{equation}
The current target $y_{ij}$ follows this prefix as the next assistant response.
Later observations and responses remain outside $c_{ij}$.

This construction preserves the trajectory origin and the state accumulated
before each target.  Object identity, observer pose, reference frame, and
source units remain consistent throughout an episode.  The resulting data
contain 47,096 turns, comprising 45,006 supervised states and 2,090 observation
turns that provide context.

Keeping an episode intact also preserves dependencies across turns.  A metric
displacement is interpreted relative to the preceding observer state, while a
temporal answer is interpreted relative to the events already observed.
Training therefore associates each target with the causal context under which
its spatial meaning is defined.  Algorithm~\ref{alg:chronoreplay} summarizes
the construction together with the family-balanced objective of
Section~\ref{sec:family-balanced-learning}.

\begin{algorithm}[H]
\caption{\method{}: Causal Replay construction and family-balanced learning.}
\label{alg:chronoreplay}
\small
\begin{tabbing}
\hspace{1.6em}\=\hspace{1.6em}\=\hspace{1.6em}\=\kill
\textbf{Input:} base VLM $\theta_0$; preservation pool $\mathcal D_P$ with
weights $w_r$; multi-round sources; trajectory mass $\alpha$\\
\textbf{Output:} trained parameters $\theta$\\[2pt]
\textit{Stage I: Causal Replay construction}\\
1:\> $\mathcal D_T\leftarrow\emptyset$\\
2:\> \textbf{for} each multi-round source episode $i$, turns kept in original
order \textbf{do}\\
3:\>\> \textbf{for} $j=1,\ldots,m_i$ \textbf{do}\\
4:\>\>\> $c_{ij}\leftarrow(h_{ij};\,q_{i1},y_{i1},\ldots,q_{i,j-1},y_{i,j-1})$
\hfill $\triangleright$ causal prefix, Eq.~\ref{eq:serialized-causal-context}\\
5:\>\>\> keep $y_{ij}$ in its native format; record its interface
$\kappa_{ij}$ \hfill $\triangleright$ Section~\ref{sec:native-refinement}\\
6:\>\> \textbf{end for}\\
7:\>\> $\mathcal D_T\leftarrow\mathcal D_T\cup
\{(c_{ij},q_{ij},y_{ij},\kappa_{ij})_{j=1}^{m_i}\}$
\hfill $\triangleright$ Eq.~\ref{eq:replay-episode}\\
8:\> \textbf{end for}\\[2pt]
\textit{Stage II: family-balanced learning}\\
9:\> $\mathcal L_P\leftarrow\sum_{r}w_r\ell_r^P/\sum_r w_r$;\quad
$\mathcal L_T\leftarrow|\mathcal T|^{-1}\sum_{(i,j)\in\mathcal T}\ell_{ij}^T$
\hfill $\triangleright$ Eq.~\ref{eq:family-risks}\\
10:\> $\lambda\leftarrow\alpha/(1+\alpha)$\\
11:\> $\theta\leftarrow\arg\min_{\theta}\,(1-\lambda)\,\mathcal L_P
+\lambda\,\mathcal L_T$, starting from $\theta_0$
\hfill $\triangleright$ Eq.~\ref{eq:causal-replay-objective}; one epoch,
Table~\ref{tab:training-config}\\
12:\> \textbf{return} $\theta$
\end{tabbing}
\end{algorithm}

\subsection{Scene Separation}

We separate training and evaluation by physical scene.  A scene may appear
under different frame rates, crops, filenames, or video encodings, so media
files derived from the same scan are assigned a common scene identity.  This
identity is established before replay examples are constructed.

Our fixed OVO-S evaluation and the UCS evaluation share no physical scenes
with the training data.
For OST, we train on its released training data and evaluate on a fixed split
with disjoint scenes.  We withhold \OstWithheldScenes{} overlapping ScanNet
scenes \citep{dai2017scannet}, while retaining every official OST question
type in the evaluation set.

Replay construction uses only annotations, observations, timestamps, reference
frames, and answers from the training sources.  Evaluation questions and
answers play no role in example construction, family weighting, or target
selection.  The same scene separation applies to every model variant and
matched intervention.

\section{Optimization and Controlled Comparisons}
\label{app:experimental-details}

\subsection{Training Settings}

The primary model and its original controls are initialized independently from
the same Qwen3-VL-8B-Instruct parameters. Training updates the language model
and visual merger while keeping the visual encoder fixed. These original runs
use one epoch of supervised learning with the configuration in
Table~\ref{tab:training-config}. The single-family diagnostic in
Appendix~\ref{app:single-family-control} compares the canonical method with
categorical-only and trajectory-only variants.
The headline scores use one random seed. The natural-turn versus full-objective
comparison covers three paired training seeds (Table~\ref{tab:objective-paired-seeds}).
The comparison of chronological and shifted replay is repeated with two random
seeds, with each pair sharing its initialization and optimization settings.

\begin{table}[H]
\caption{Shared optimization settings for \method{} and its matched
controls.}
\label{tab:training-config}
\centering
\small
\begin{tabular*}{\textwidth}{@{\extracolsep{\fill}}ll}
\toprule
Setting & Value\\
\midrule
Base model & Qwen3-VL-8B-Instruct\\
Updated components & Language model and visual merger\\
Epochs / global batch size & 1 / 32\\
Learning rate & $2\times10^{-6}$ \\
Schedule & Cosine decay with 3\% warmup\\
Optimizer & AdamW, $\beta=(0.9,0.999)$\\
Weight decay & $0.01$\\
Numerical precision & bfloat16\\
Maximum sequence length & 32,768 tokens\\
Maximum processed pixels & 200,704 per visual item\\
Relative trajectory mass & $\alpha=\TrajectoryMass{}$\\
\bottomrule
\end{tabular*}
\end{table}
\FloatBarrier

The same tokenizer, image processor, and conversation template are used
across runs.  Loss is applied to supervised assistant responses, with
observations and preceding interaction providing causal context.
Appendix~\ref{app:reproducibility} specifies the model revision and execution
details.

\subsection{Loss Normalization}

We normalize supervision at the answer, transition, and family levels.
For an answer sequence $y=(y_1,\ldots,y_N)$ following causal context $c$
and query $q$, the native loss is
\begin{equation}
\ell_\theta(c,q,y,\kappa)
=-\frac{1}{N}\sum_{n=1}^{N}
\log p_\theta(y_n\mid c,q,\kappa,y_{<n}).
\label{eq:appendix-native-token-loss}
\end{equation}
Averaging over answer tokens gives each target a contribution normalized
by its length.

For episode $i$ with $m_i$ supervised turns, we average the turn losses
and assign the episode a weight proportional to $m_i$, which is the same as
giving every supervised transition one common weight within the trajectory
family while retaining its complete causal context.  Longer episodes
therefore contribute in proportion to their number of supervised turns.

Preservation examples retain the weights supplied by their source.
The total trajectory weight is then scaled to $\alpha$ times the total
preservation weight, and a common rescaling sets the mean example weight
to one.  This implements the family risks defined in the main text.

For a fixed set of causal contexts and targets, this weighted empirical
objective is invariant to grouping turns into episodes or separate
examples.  The grouping used during optimization determines how those
contexts are presented.

\stitle{Joint-training objective comparison.}
Table~\ref{tab:joint-objectives} compares three objectives on the same joint
replay data, with chronological prefixes, native targets, training budget,
and evaluation access held fixed. Ordinary token SFT uses the standard loss
over supervised assistant tokens. Natural turn weighting retains answer-length
normalization and within-family weighting, with family mass determined by
the resulting turn weights. \method{} additionally sets the relative family
mass explicitly through $\alpha=0.5$.

\begin{table}[H]
\caption{Joint-training objective comparison (\%). OVO-S uses Uniform-128;
OST and UCS use the same fixed protocols as the main results. Mean is the
unweighted average of the three benchmark scores.}
\label{tab:joint-objectives}
\centering
\small
\begin{tabular*}{\textwidth}{@{\extracolsep{\fill}}lcccc}
\toprule
Training objective & OVO-S & OST & UCS & Mean\\
\midrule
Base & \BaseOvosUniform{} & \BaseOst{} & \BaseUcs{} & \BaseMean{}\\
Ordinary token SFT & 51.8 & 60.7 & 49.8 & 54.10\\
Natural turn weighting & 52.9 & 62.3 & 51.5 & 55.57\\
\rowcolor{MethodBlue}
\method{} & \textbf{\CsaOvosUniform{}} & \textbf{\CsaOst{}}
& \textbf{\CsaUcs{}} & \textbf{\HeadlineMean{}}\\
\bottomrule
\end{tabular*}
\end{table}
\FloatBarrier

Both ordinary token SFT and natural turn weighting improve all three
benchmarks over the base. Family-balanced learning provides a further gain
on each benchmark. These comparisons support the combined weighting design
without separately isolating every level of normalization.

\stitle{Paired-seed replication.}
Table~\ref{tab:objective-paired-seeds} repeats the natural-turn versus
full-objective comparison across three paired training seeds. The first pair
corresponds to the original scores in Table~\ref{tab:joint-objectives};
the other two extend the comparison to additional seeds. The full objective improves all
three benchmarks in every pair. The table reports the sample standard
deviation of the paired differences to describe variation across these runs.

\begin{table}[H]
\caption{Three-seed replication of the objective comparison (\%). Natural:
natural turn weighting; Full: \method{}; $\Delta$: Full minus Natural in
percentage points. Pair indices enumerate runs. Differences and summaries
use the displayed scores; summary values are rounded after aggregation.}
\label{tab:objective-paired-seeds}
\centering
\small
\setlength{\tabcolsep}{3pt}
\begin{tabular*}{\textwidth}{@{\extracolsep{\fill}}l*{9}{r}}
\toprule
& \multicolumn{3}{c}{OVO-S} & \multicolumn{3}{c}{OST} & \multicolumn{3}{c}{UCS}\\
\cmidrule(lr){2-4}\cmidrule(lr){5-7}\cmidrule(lr){8-10}
Pair & Natural & Full & $\Delta$ & Natural & Full & $\Delta$ & Natural & Full & $\Delta$\\
\midrule
1 & 52.90 & 53.89 & +0.99 & 62.30 & 63.93 & +1.63 & 51.50 & 52.77 & +1.27\\
2 & 52.71 & 53.33 & +0.62 & 62.58 & 63.83 & +1.25 & 51.68 & 52.56 & +0.88\\
3 & 53.02 & 54.17 & +1.15 & 62.14 & 63.56 & +1.42 & 51.39 & 52.74 & +1.35\\
\midrule
\rowcolor{MethodBlue}
Mean & 52.88 & 53.80 & \textbf{+0.92} & 62.34 & 63.77 & \textbf{+1.43}
& 51.52 & 52.69 & \textbf{+1.17}\\
SD of $\Delta$ & -- & -- & 0.27 & -- & -- & 0.19 & -- & -- & 0.25\\
\bottomrule
\end{tabular*}
\end{table}
\FloatBarrier

\subsection{Relative Weight of Trajectory Supervision}
\label{app:family-mass-sweep}

The family-mass sweep is a separate matched set of runs. We vary $\alpha$
while holding training examples, chronological order, initialization, and
optimization settings fixed within this set. The $\alpha=0.5$ row is an
independent replication under the sweep protocol, distinct from the headline
checkpoint.
Table~\ref{tab:family-mass-sweep} reports OVO-S under the same adaptive
observation policy and OST under the complete evaluation protocol.
For comparison under adaptive access, the headline model scores
\CsaOvosSupport{} on OVO-S and \CsaOst{} on OST.
The final column is the arithmetic mean of the two benchmark scores.

\begin{table}[H]
\caption{Sensitivity to the relative weight of trajectory supervision in a
separate matched run set (\%). OVO-S uses the fixed adaptive access policy.}
\label{tab:family-mass-sweep}
\centering
\small
\begin{tabular*}{\textwidth}{@{\extracolsep{\fill}}cccc}
\toprule
$\alpha$ & OVO-S & OST & Mean\\
\midrule
0.25 & 57.56 & 62.73 & 60.15\\
\rowcolor{MethodBlue}
0.50 & \textbf{58.27} & \textbf{64.13} & \textbf{61.20}\\
1.00 & 57.80 & 64.04 & 60.92\\
\bottomrule
\end{tabular*}
\end{table}
\FloatBarrier

Among the tested settings, $\alpha=0.50$ gives the highest score on both
benchmarks.  This comparison supports an intermediate allocation of
supervision between single-turn preservation and trajectory learning.

\subsection{Target Interface}
\label{app:matched-interface}

This comparison examines how answer representation affects transfer.
Two models share the same 39,561 training examples, initialization,
random seed, global batch size, training schedule, and family weights.
The native variant retains the original source responses. The canonical
variant serializes each complete original answer string as
\texttt{\{"value": "original answer"\}}, retaining units, precision, and
wording. JSON decoding recovers the original target exactly. Observations,
questions, supervised-turn positions, row order, and training weights are
unchanged; the intervention changes the assistant's output serialization.

Evaluation converts each response into the input expected by the official
scorer using a fixed reader for its trained format.  Each reader operates
on the generated response alone.  All 8,351 OST responses and all 8,114 UCS
responses yield an extracted value; the official OST scorer evaluates its
designated 7,814 turns.  Detailed extraction rules are provided in
Appendix~\ref{app:reproducibility}.  The comparison covers OST and UCS,
whose generated answers can be evaluated through these readers.
OVO-S uses greedy generation with deterministic option-label extraction.

Canonical supervision obtains \CommonInterfaceOst{} on OST and
\CommonInterfaceUcs{} on UCS.  Native supervision obtains
\ReplicaOst{} and \ReplicaUcs{}, respectively.  Canonical targets improve
OST by 1.48 points, while native targets improve UCS by 2.92 points.
Native supervision consequently yields the higher mean across the two
benchmarks.  This crossed response indicates that the training interface
affects the distribution of transfer across tasks.

\subsection{Chronological Alignment}
\label{app:origin-shift-construction}

We examine the contribution of chronology by cyclically shifting the
turn blocks within each episode.  Let $b_i$ denote the total number of
turn blocks, including turns that provide observation context.  For
$b_i>1$, a deterministic offset $d_i\in\{1,\ldots,b_i-1\}$ defines
\begin{equation}
\sigma_i(j)
=1+\bigl((j+d_i-1)\bmod b_i\bigr).
\label{eq:origin-shift-permutation}
\end{equation}
The offset is determined by the episode identifier.  Episodes containing
one block retain their original order.

Each block carries its observations, query, response, and supervision
weight into the shifted sequence.  The two variants therefore share
the same local content and total family weight.  They also share
initialization and optimization settings.  The shift changes the starting
point of the episode and the accumulated context preceding each target.

We repeat the comparison with two random seeds.  Within each pair,
chronological and shifted replay share the same initialization, training
data, and optimization settings.  Both models are evaluated on identical
conversations in their original temporal order.
Table~\ref{tab:origin-seed-details} reports their OST performance.

\begin{table}[H]
\caption{Chronological and shifted replay on OST (\%).  Each row compares
models trained with the same seed and evaluated on the same scenes.}
\label{tab:origin-seed-details}
\centering
\small
\begin{tabular*}{\textwidth}{@{\extracolsep{\fill}}lcc}
\toprule
Training order & Run 1 & Run 2\\
\midrule
Cyclic shift & \ShiftedOst{} & \ShiftedSeedTwoOst{}\\
\rowcolor{MethodBlue}
Chronological & \textbf{\CsaOst{}} & \textbf{\ChronoSeedTwoOst{}}\\
\bottomrule
\end{tabular*}
\end{table}
\FloatBarrier

Chronological replay is ahead by \OriginGapSeedOne{} and
\OriginGapSeedTwo{} points in the two comparisons, a mean advantage of
\OrderGain{} points.  Because the shift also changes the context preceding
each target, the comparison tests chronological alignment as a whole rather
than the episode origin alone.

\subsection{Training-Topology Replication}
\label{app:topology-replication}

Table~\ref{tab:topology-replication} tests whether the cross-interface result
depends on the synchronized training topology.  Both rows start from the same
pinned official checkpoint and use the exact same 39,561 training rows, seed,
optimizer, global batch of 32, and one-epoch schedule.  The replication changes
world size from eight to four and compensates by changing gradient accumulation
from two to four.

\begin{table}[htbp]
\caption{Fresh training-topology replication under complete model-only
evaluation (\%), together with the frozen OVO-S system protocol.  Observation
access and scorers are fixed within each reported column.}
\label{tab:topology-replication}
\centering
\footnotesize
\resizebox{\textwidth}{!}{%
\begin{tabular}{lcccccc}
\toprule
Run & GPUs & OVO-U128 & OVO-Adaptive & OST & UCS & Model-only mean\\
\midrule
Headline checkpoint & 8 & \CsaOvosUniform{} & \CsaOvosSupport{}
& \CsaOst{} & \CsaUcs{} & \HeadlineMean{}\\
\rowcolor{MethodBlue}
Fresh replication & 4 & \ReplicaOvosUniform{}
& \ReplicaOvosSupport{} & \ReplicaOst{} & \ReplicaUcs{} & \ReplicaMean{}\\
\bottomrule
\end{tabular}}
\end{table}
\FloatBarrier

The rerun preserves positive transfer over the official base on all three
endpoints and improves the model-only mean by 0.34 points. Relative to the
headline checkpoint, OVO-S Uniform-128 increases by 0.21 points, UCS by 1.49,
and OST decreases by 0.69. This supports robustness to the distributed
training topology, but does not constitute an independent-seed variance
estimate. The fixed adaptive access policy raises the replica from
\ReplicaOvosUniform{} to \ReplicaOvosSupport{}, an access gain of
\ReplicaOvosPolicyGain{} points and a close replication of the headline
\CsaOvosSupport{} system score.

\subsection{Single-Family Supervision Control}
\label{app:single-family-control}

Table~\ref{tab:single-family-control} gives the single-family diagnostic
summarized in Table~\ref{tab:controlled-analysis}(a). The \method{} row is the
canonical model reported in Tables~\ref{tab:ovos-main}--\ref{tab:ucs-main}.
This categorical-only arm differs from the full preservation pool, which
also includes numerical examples (Table~\ref{tab:replay-pool}).

\begin{table}[H]
\caption{Single-family diagnostic on OVO-S, OST, and UCS (\%).
The \method{} row uses the canonical model from Tables~\ref{tab:ovos-main}--\ref{tab:ucs-main}.
Mean averages the three benchmarks under the evaluation protocols described above.}
\label{tab:single-family-control}
\centering
\small
\begin{tabular*}{\textwidth}{@{\extracolsep{\fill}}lcccc}
\toprule
Training & OVO-S & OST & UCS & Mean\\
\midrule
Official base & \BaseOvosUniform{} & \BaseOst{} & \BaseUcs{} & \BaseMean{}\\
Categorical-only & \textbf{\CategoricalControlOvos{}} & \CategoricalControlOst{} & \CategoricalControlUcs{} & \CategoricalControlMean{}\\
Trajectory-only & \TrajectoryControlOvos{} & \TrajectoryControlOst{} & \TrajectoryControlUcs{} & \TrajectoryControlMean{}\\
\rowcolor{MethodBlue}
\textbf{\method{}} & \textbf{\CsaOvosUniform{}} & \textbf{\CsaOst{}} & \textbf{\CsaUcs{}} & \textbf{\HeadlineMean{}}\\
\bottomrule
\end{tabular*}
\end{table}

Categorical-only training leads OVO-S but falls below the base on OST;
trajectory-only gives little OST Overall gain. Both single-family arms
improve UCS over the base. Canonical \method{} achieves the highest three-benchmark
mean, \HeadlineMean{}, compared with \CategoricalControlMean{} for
categorical-only and \TrajectoryControlMean{} for trajectory-only training.
This comparison measures the observed specialization pattern across the
single-family variants and the full recipe. The training families still
differ in supervised tasks, so the table does not assign the full joint gain
to one isolated mechanism.
\FloatBarrier

\section{Adaptive Access: Temporal Observation Selection}
\label{app:temporal-support}

Spatial queries differ in the history needed to answer them. Current
relations can often be inferred from recent observations, whereas motion
and event order depend on changes over time. Layout and historical
references call for broader coverage. We use this distinction to adapt
observation selection to the temporal scope of each question. The policy
is fixed before evaluation and applied to the same trained model on
OVO-S; OST and UCS retain their original observation protocols.

\subsection{Matching Observations to Query Scope}

Table~\ref{tab:temporal-basis} summarizes three complementary forms of
temporal coverage. Local sampling concentrates on the current state.
Multiscale sampling emphasizes recent changes while retaining earlier
observations. Global sampling covers the complete history available
at the query time.

\begin{table}[!htbp]
\centering
\caption{\textbf{Temporal coverage for different spatial queries.}
Frame budgets are upper limits; short histories and duplicate removal
can yield fewer frames.}
\label{tab:temporal-basis}
\vspace{4pt}
\small
\setlength{\tabcolsep}{5pt}
\renewcommand{\arraystretch}{1.15}
\begin{tabularx}{\textwidth}{
@{}l*{3}{>{\raggedright\arraybackslash}X}@{}
}
\toprule
& \textbf{Local}
& \textbf{Multiscale}
& \textbf{Global}\\
\midrule
Query scope
& Current state
& Motion and event order
& Layout and historical references\\
\addlinespace[3pt]
Coverage
& Recent observations
& Last 30\,s; 30--300\,s; earlier history
& Full observed prefix\\
\addlinespace[3pt]
Sampling
& Dense tail at 4 fps
& 60\% / 30\% / 10\% of the frame budget
& Uniform\\
\addlinespace[3pt]
Frame budget
& 8
& 128
& 128\\
\bottomrule
\end{tabularx}
\end{table}

Selection follows the visible question text. Named historical endpoints
and persistent relative pose take priority and receive global coverage.
Expressions of motion, change, or chronological order receive multiscale
coverage. Explicit questions about the current state receive local
coverage. Static layout and ambiguous temporal scope default to global
coverage. A named historical reference therefore takes precedence over
an incidental mention of the current state.

The selector uses question text alone. Answer choices, reference answers,
benchmark categories, model predictions, and scores are excluded.
Each question is answered once using the selected observations.

\subsection{Sampling the Causal History}

Let $t_q$ be the query time and $f$ the video frame rate. We retain the
evaluated boundary convention
$e=\max(1,\lfloor f t_q\rfloor)$, with frame indices starting at zero.
The benchmark timestamps place this boundary within the video.

\stitle{Local and global coverage.}
Global sampling takes the integer floors of 128 equally spaced
positions on the inclusive interval $[0,e]$, then sorts and removes
repeated indices. Local sampling uses spacing
$d=\max(1,\operatorname{round}(f/4))$ and the index sequence
\begin{equation}
I_{\mathrm{local}}
=
\{\max(0,e-7d),\max(0,e-7d)+d,\ldots,e\},
\label{eq:local-indices}
\end{equation}
retaining the final eight indices if rounding produces an extra sample.

\stitle{Multiscale coverage.}
With budget $B=128$, the most recent 30 seconds receive
$\operatorname{round}(0.6B)$ samples. The preceding interval extending
to 300 seconds before the query receives
$\operatorname{round}(0.3B)$ samples, and the remaining budget covers
the earlier history.

Define $s_{30}=\max(0,e-30f)$ and
$s_{300}=\max(0,e-300f)$. Let
$\operatorname{Lin}(a,b,n)$ return the integer floors of $n$ equally
spaced positions in $[a,b)$. The requested indices are
\begin{equation}
\begin{aligned}
I_{\mathrm{multi}}={}&
\operatorname{Lin}
(s_{30},e,\operatorname{round}(0.6B))\\
&\cup
\operatorname{Lin}
(s_{300},s_{30},\operatorname{round}(0.3B))\\
&\cup
\operatorname{Lin}
(0,s_{300},n_{\mathrm{early}}),
\end{aligned}
\label{eq:multiscale-indices}
\end{equation}
where $n_{\mathrm{early}}$ is the remaining budget. Empty intervals
are skipped and repeated indices are removed. Global and local
sampling include the query boundary; multiscale intervals exclude
their right endpoint. These conventions preserve the sampling
procedure used for the reported results.

\subsection{Contribution of Observation Selection}
\label{app:access-effects}

Adapting temporal coverage benefits both the official model and
\method{}. The official model improves from \BaseOvosUniform{} to
\BaseOvosSupport{}, while \method{} improves from \CsaOvosUniform{}
to \CsaOvosSupport{} under our fixed OVO-S evaluation protocol. Each comparison holds the
model parameters fixed and varies the observations presented
at inference.

The comparison at a similar frame budget further examines temporal
placement. Uniform sampling of 81 positions from the ordered set
of 128 uniform indices obtains \EqualBudgetOvos{}, whereas adaptive
selection obtains \CsaOvosSupport{} using \MeanFrames{} frames on
average. The advantage of \AdaptiveEqualBudgetGain{} points supports
matching temporal coverage to query scope at approximately the same
visual cost. Table~\ref{tab:ovos-system} separates the training and access
effects using the same 12-family mean.

\section{Qualitative Spatial Analysis}
\label{app:qualitative-case}

The first four cases examine three dependencies of native spatial answers:
historical state retrieval, reference-frame transformation, and accumulated
displacement; their scenes come from ScanNet, Matterport3D, and ARKitScenes
\citep{dai2017scannet,chang2017matterport3d,baruch2021arkitscenes}. Chronological and origin-shifted checkpoints receive identical
ordered observations and questions at evaluation, under the comparison defined
in Appendix~\ref{app:origin-shift-construction}. Figures are selected examples
of output differences; they do not estimate the prevalence of errors or
identify an observation-selection failure. Figures~\ref{fig:spatial-first-appearance}--\ref{fig:spatial-motion-distance}
use Run~2 of Table~\ref{tab:origin-seed-details}.

Two additional cases examine native supervision complementarity and a global-layout
failure. The first uses the historical diagnostic models specified in
Appendix~\ref{app:supervision-case}; the second compares Base and the headline
\method{} model under fixed observations.

\subsection{Historical State Retrieval}
\label{app:historical-retrieval}

Figure~\ref{fig:spatial-case} illustrates how an earlier state supports both a
temporal answer and a later reference-relative displacement. The retrieval
question requires the round in which the two objects occupied the specified
side of the observer. The outputs distinguish recovery of that earlier state
from selection of a later observation.

\begin{figure}[!htbp]
  \centering
  \includegraphics[
    width=\linewidth,
    height=0.90\textheight,
    keepaspectratio
  ]{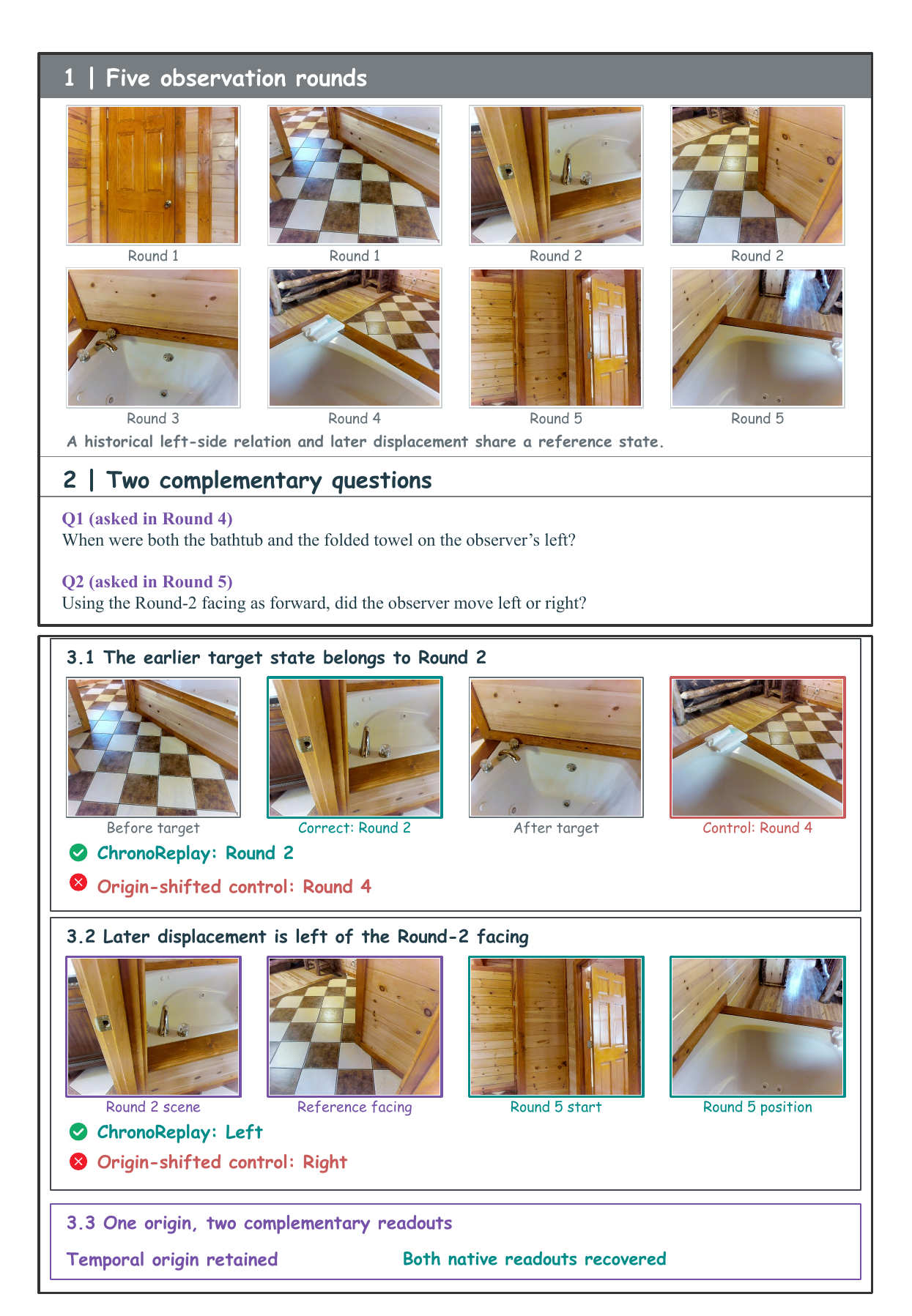}
  \caption{\textbf{Historical context in spatial reasoning.}
  The first question retrieves an earlier state in which the bathtub
  and folded towel lie on the observer's left. The second evaluates
  a later location relative to the facing direction in Round~2.
  Teal indicates annotated states, and red shows the control's
  incorrect retrieval or displacement. Both questions depend on
  a reference established earlier in the episode.}
  \label{fig:spatial-case}
\end{figure}

Figure~\ref{fig:spatial-first-appearance} examines the related distinction
between first appearance and category order. The queries require identifying
which object was observed and when it first appeared. The apparent early
match is a different fixture, making object identity part of the temporal
retrieval problem.

\begin{figure}[!htbp]
  \centering
  \includegraphics[height=0.90\textheight,width=\linewidth,keepaspectratio]{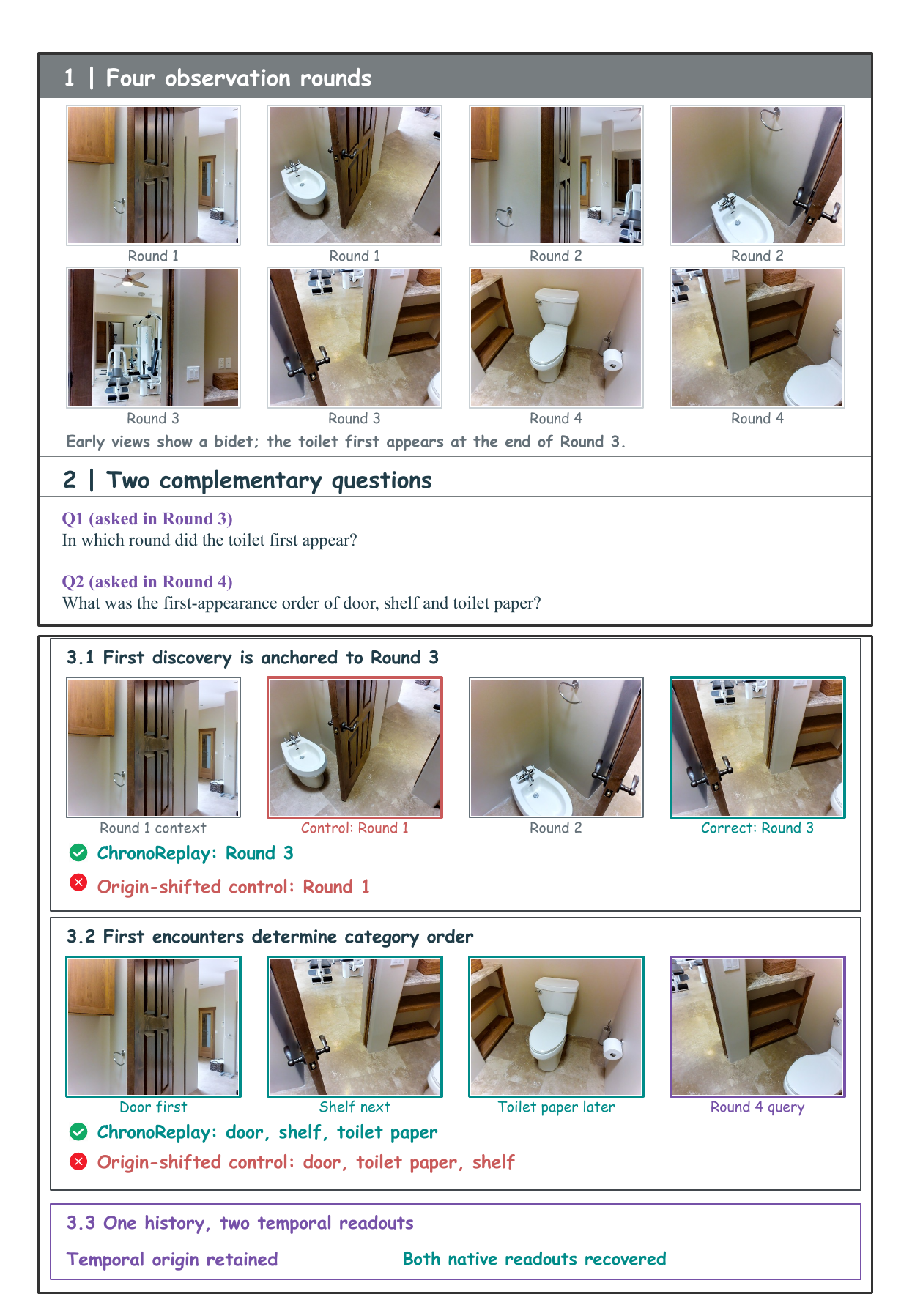}
  \caption{\textbf{First appearance and category order.} In scene
  \texttt{1mp3d\_0067\_region16}, ChronoReplay returns Round~3 for the first
  appearance of the toilet and the order door, shelf, toilet paper. The matched
  origin-shifted control returns Round~1 and reverses the latter two categories.
  The early white fixture is a bidet; the toilet is partially visible at the
  lower-right edge of the final Round~3 frame.}
  \label{fig:spatial-first-appearance}
\end{figure}

\subsection{Reference-Frame Transformation}
\label{app:reference-frame-cases}

Figure~\ref{fig:spatial-reference-rotation} combines a historical co-visibility
query with a rotation query anchored to an earlier facing direction. The
second answer depends on interpreting later observations in the specified
reference frame. Its sign error is distinct from returning the wrong round
for the first question, although both queries depend on earlier observations.

\begin{figure}[!htbp]
  \centering
  \includegraphics[height=0.90\textheight,width=\linewidth,keepaspectratio]{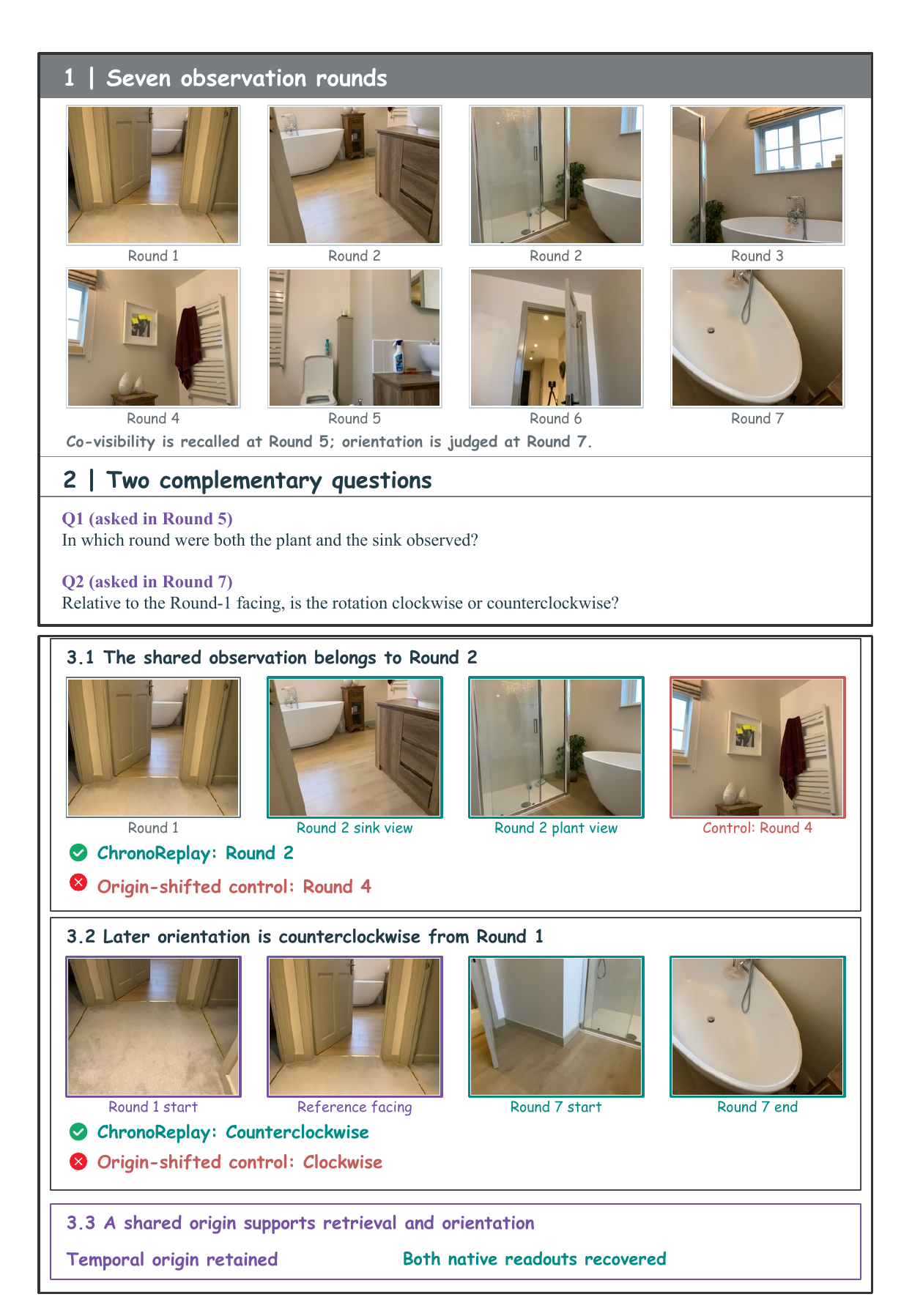}
  \caption{\textbf{Historical retrieval and rotation.}
  In scene \texttt{Validation\_42898818}, the Round~5 question asks which
  round contained both the plant and sink; ChronoReplay returns Round~2,
  whereas the control returns Round~4. A Round~7 question defines the
  Round~1 facing as the orientation reference and specifies a rotation below
  $180^\circ$. ChronoReplay correctly returns counterclockwise, while the
  control returns clockwise.}
  \label{fig:spatial-reference-rotation}
\end{figure}

\begin{samepage}
\subsection{Accumulated Displacement and Historical Proximity}
\label{app:displacement-cases}

Figure~\ref{fig:spatial-motion-distance} asks about displacement relative to a
reference facing and about the closest previously observed state. The first
requires tracking a location change, rather than reporting turning direction;
the second requires comparing historical distances using the source annotation.
The paired outputs illustrate two ways an answer can disagree with its
historical reference under the same evaluation observations.
\end{samepage}

\begin{figure}[!htbp]
  \centering
  \includegraphics[height=0.87\textheight,width=\linewidth,keepaspectratio]{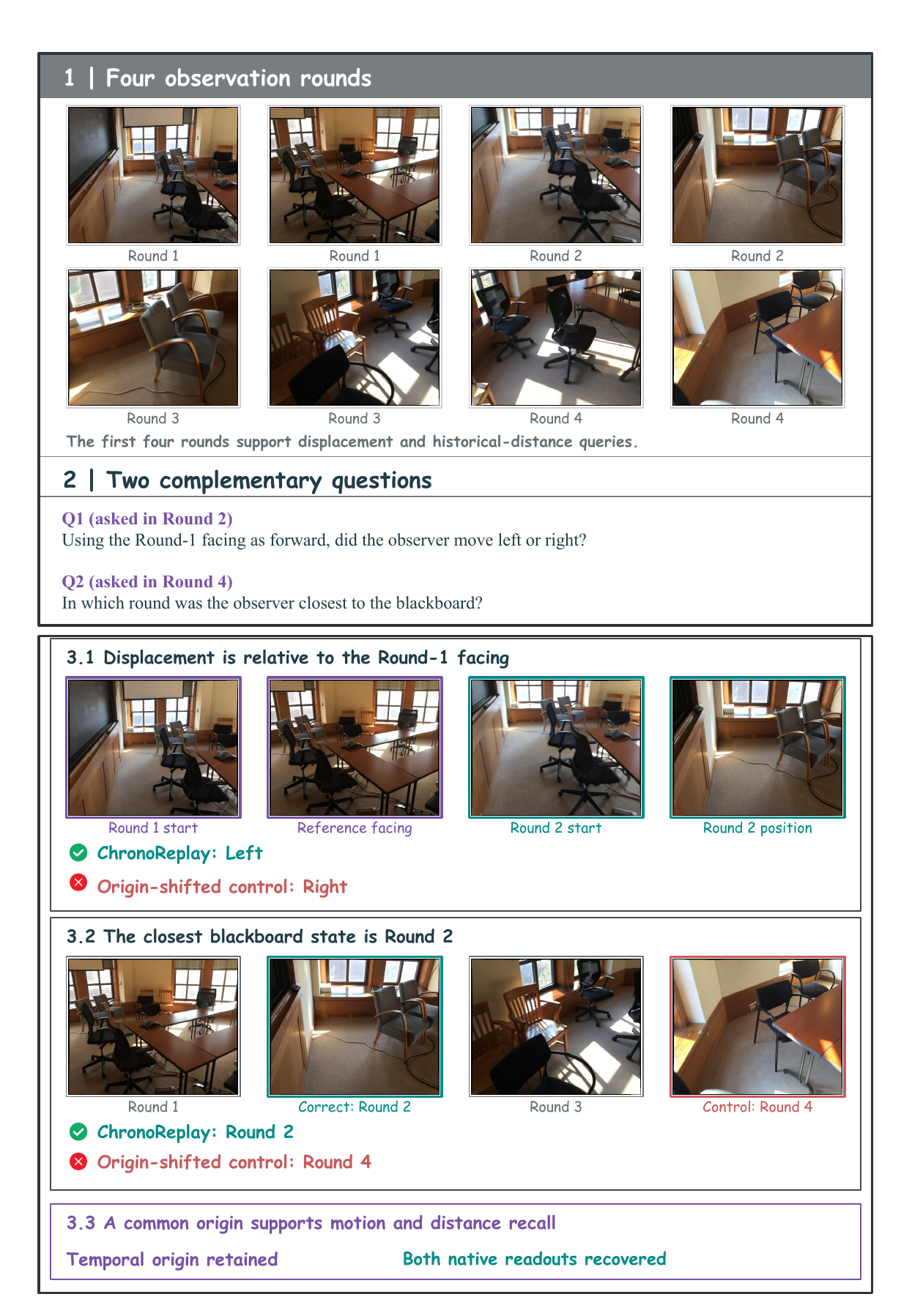}
  \caption{\textbf{Reference displacement and nearest-state retrieval.}
  In scene \texttt{scene0496\_00}, ChronoReplay identifies leftward displacement
  relative to the Round~1 facing, while the control answers right. When asked
  in Round~4 for the closest blackboard state, ChronoReplay retrieves Round~2
  rather than the control's Round~4. Left/right denotes displacement, not
  turning direction; the closest-state label follows the source annotation
  rather than apparent object size in a single image.}
  \label{fig:spatial-motion-distance}
\end{figure}

\clearpage
\subsection{Native Numerical Estimation and Supervision Complementarity}
\label{app:supervision-case}

Figure~\ref{fig:supervision-complementarity} compares Base, historical
Preservation and Trajectory diagnostic models, and Joint replay on two
questions about the same ornate
mirror in OST-Bench scene \texttt{1mp3d\_0004\_region16}. The numerical
question is asked at the end of Round~3 and the categorical question at the
end of Round~4; this is a shared episode with different query endpoints,
not two questions conditioned on an identical complete history. All four
settings receive the same source-ordered images and question prompts.
The historical Preservation and Trajectory models differ from the
single-family variants in Table~\ref{tab:controlled-analysis}(a).
As in the official multi-round protocol, each model retains its own earlier
responses. The Round~4 observations are not exposed to the Round~3 query.

\stitle{Different strengths, rather than a uniform winner.}
The source annotation places the mirror on the observer's right in Round~4.
Base, Preservation, and Joint replay answer right, whereas Trajectory answers
left. The Round~3 horizontal distance is annotated as 0.95\,m; the four
predictions are 1.20, 2.00, 1.10, and 1.11\,m, respectively. Trajectory has
the smallest absolute error, 0.15\,m, followed by Joint at 0.16\,m. Thus
Joint retains the correct categorical answer while approaching the numerical
accuracy of Trajectory; it is not best on every individual readout.

\stitle{Official error convention.}
For this nonzero scalar target $y$, OST uses relative error
$\delta=|\hat y-y|/|y|$ and awards the fraction of ten strict thresholds
satisfied:
\begin{equation}
s_{\mathrm{est}}=\frac{1}{10}
\sum_{i=0}^{9}\mathbf{1}\!\left[\delta<0.50-0.05i\right].
\label{eq:case-ost-estimation}
\end{equation}
The respective relative errors are 26.3\%, 110.5\%, 15.8\%, and 16.8\%,
giving official scores 0.50, 0.00, 0.70, and 0.70. Trajectory and Joint tie
under this discretized metric despite their different raw errors. Distances
and directions follow the source annotation, not measurements inferred from
RGB appearance. This selected episode illustrates the aggregate trade-off;
it does not estimate its frequency.

\subsection{A Global Layout Regression under Fixed Observations}
\label{app:global-layout-failure}

Figure~\ref{fig:global-layout-failure} shows OVO-S L4.3 record
\texttt{4.3\_034}, an ARKitScenes room tour ending at 50 seconds. The question
asks which supplied bird's-eye view matches the scene. Under the same
Uniform-128 observations, the official Base selects the annotated option B,
while the headline ChronoReplay checkpoint selects D. Both archived records
use the same default prompt, ordered image blocks, greedy categorical
generation, and unpermuted option labels. This pair isolates an output
regression under fixed access; it is not an adaptive-versus-uniform comparison.

\stitle{The concrete disagreement.}
The video reveals the fireplace, television, table, and surrounding furniture
across successive views. In the shared coordinate frame of the supplied
option diagrams, B places the fireplace and TV in the lower region and the
table across the layout. D instead places the fireplace in the upper region
and moves the table beside the TV in the lower-right region. The figure pairs
key video observations with the original B and D option diagrams, labelled by
the model that selected each option. All four layouts are retained in the
supplementary source assets. The source provides
no evidence-time intervals for this item, so the displayed frames are selected
from the recorded sampling protocol, not labelled as official evidence spans.
This discrepancy demonstrates a global-layout failure. It does not establish
that the model built and then lost a global map, nor isolate perception,
cross-view integration, or option matching as the internal cause.

Exact questions, model paths, responses, numerical scores, frame references,
and all four source layout options accompany the manuscript in
\texttt{supplementary/spatial\_capability\_cases.json}. No new model inference
is used for either case.

\clearpage
\begin{figure}[H]
  \centering
  \includegraphics[height=0.87\textheight,width=\linewidth,keepaspectratio]{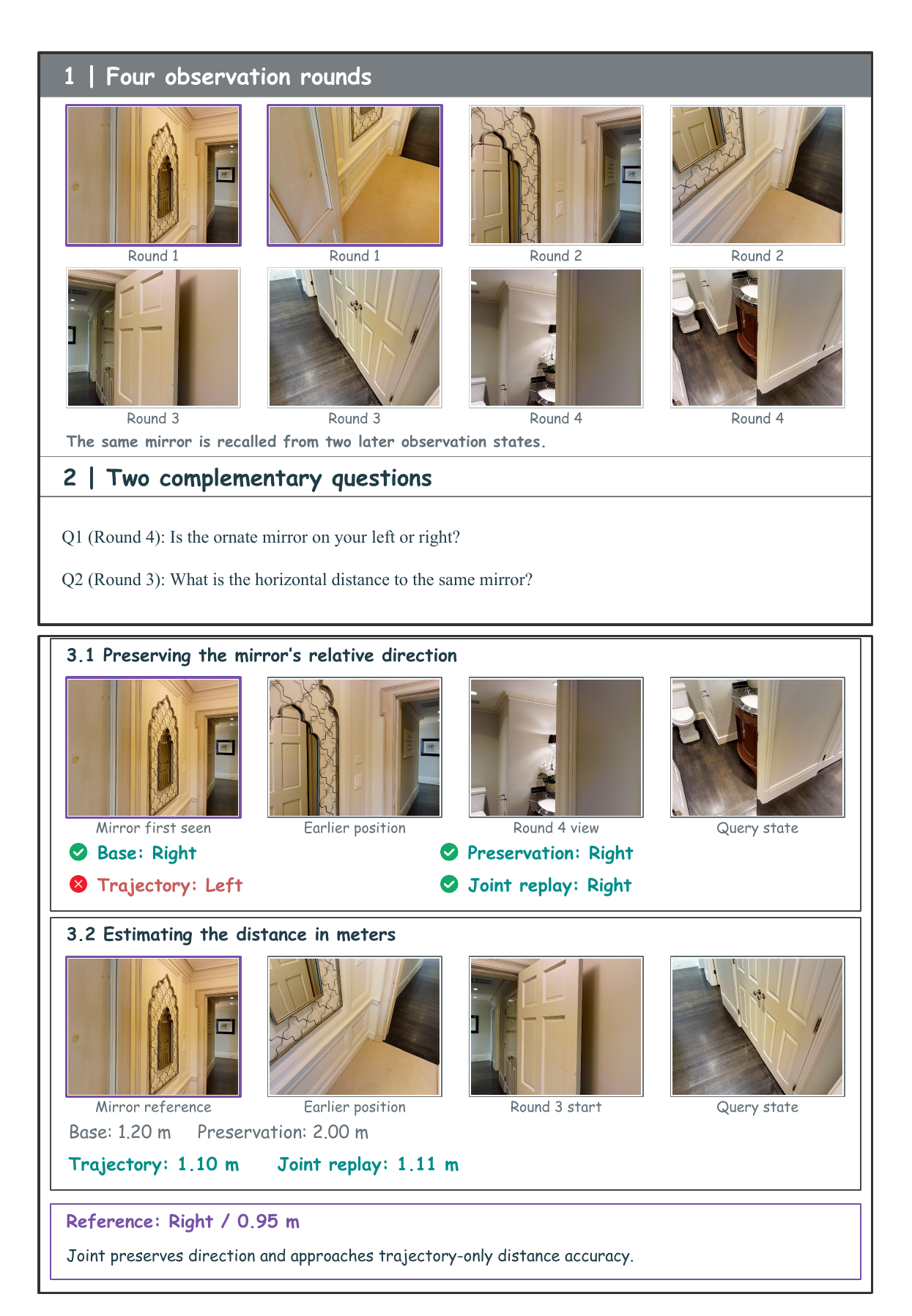}
  \vspace{-0.4em}
  \caption{\textbf{Two native interfaces on one mirror history.}
  Selected chronological observations precede a left/right judgement and a
  horizontal-distance estimate. Round labels identify the source observation turns. Base, Preservation, Trajectory, and Joint replay
  use the same observations at each query endpoint. Joint retains the correct
  direction; Trajectory has slightly lower absolute distance error, and both
  tie at 0.70 under the official metric.}
  \label{fig:supervision-complementarity}
\end{figure}

\clearpage
\begin{figure}[H]
  \centering
  \includegraphics[width=\linewidth,trim=0bp 452bp 0bp 0bp,clip]{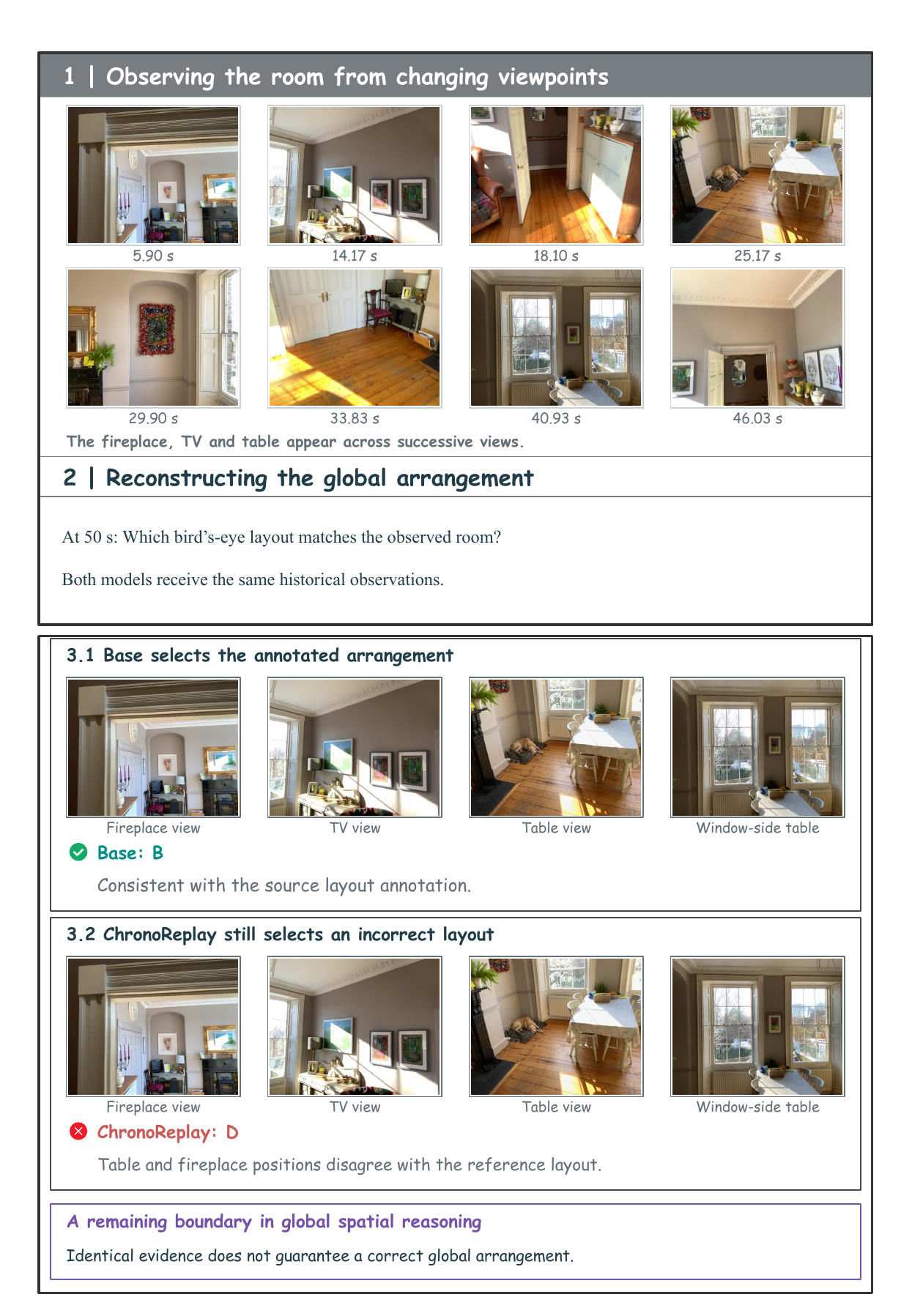}
  \par\vspace{6pt}
  \begin{minipage}[t]{0.48\linewidth}
    \centering
    \small\textbf{\textcolor{teal}{B: annotated layout / Base}}
    \par\vspace{3pt}
    \setlength{\unitlength}{\linewidth}
    \begin{picture}(1,0.8)
      \put(0,0){\includegraphics[width=\linewidth]{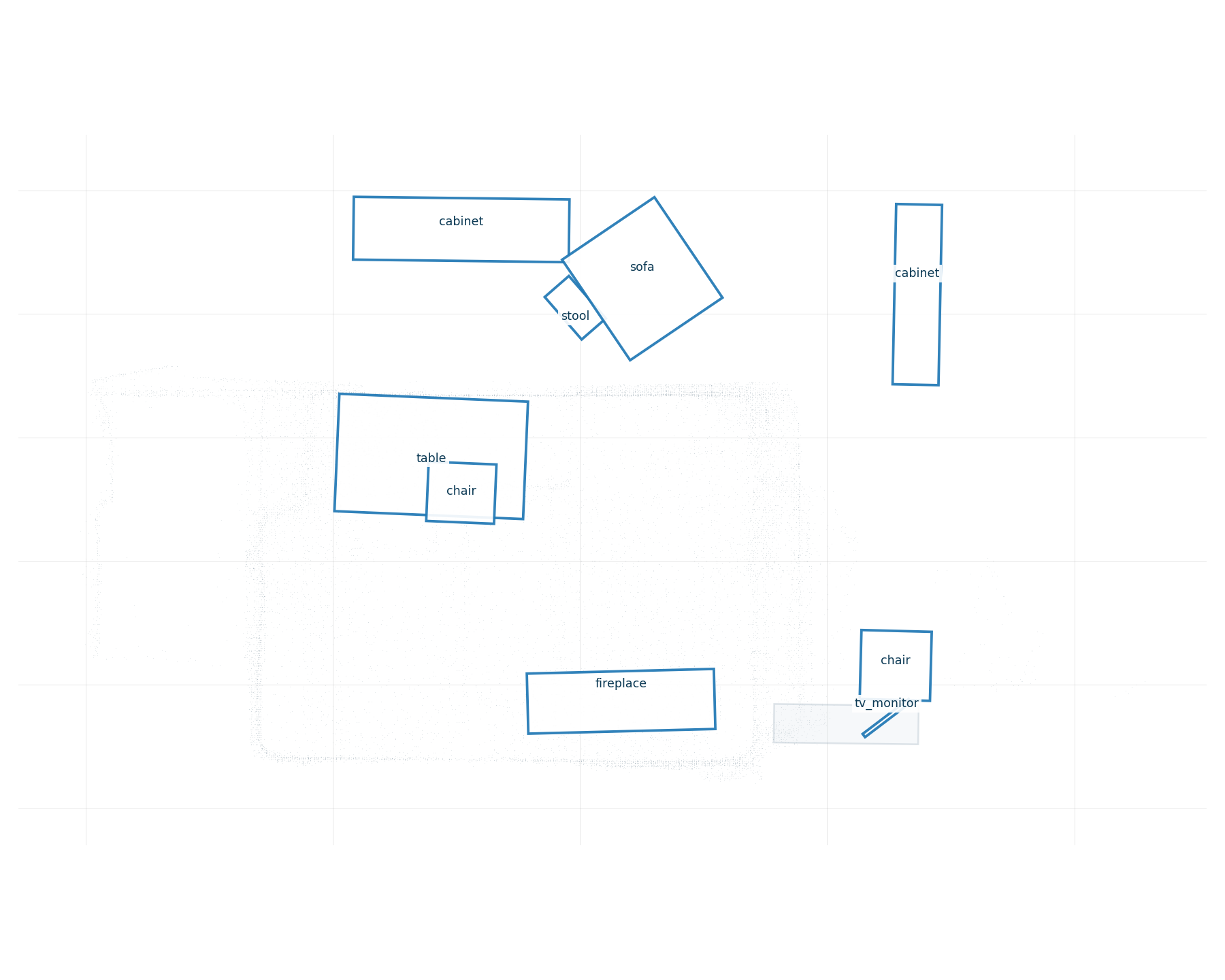}}
      \put(0.34,0.37){\makebox(0,0){\sffamily\scriptsize\bfseries Table}}
      \put(0.51,0.17){\makebox(0,0){\sffamily\scriptsize\bfseries Fireplace}}
      \put(0.74,0.14){\makebox(0,0){\sffamily\scriptsize\bfseries TV}}
    \end{picture}
    \par\small Fireplace and TV below the table
  \end{minipage}\hfill
  \begin{minipage}[t]{0.48\linewidth}
    \centering
    \small\textbf{\textcolor{red}{D: ChronoReplay's choice}}
    \par\vspace{3pt}
    \setlength{\unitlength}{\linewidth}
    \begin{picture}(1,0.8)
      \put(0,0){\includegraphics[width=\linewidth]{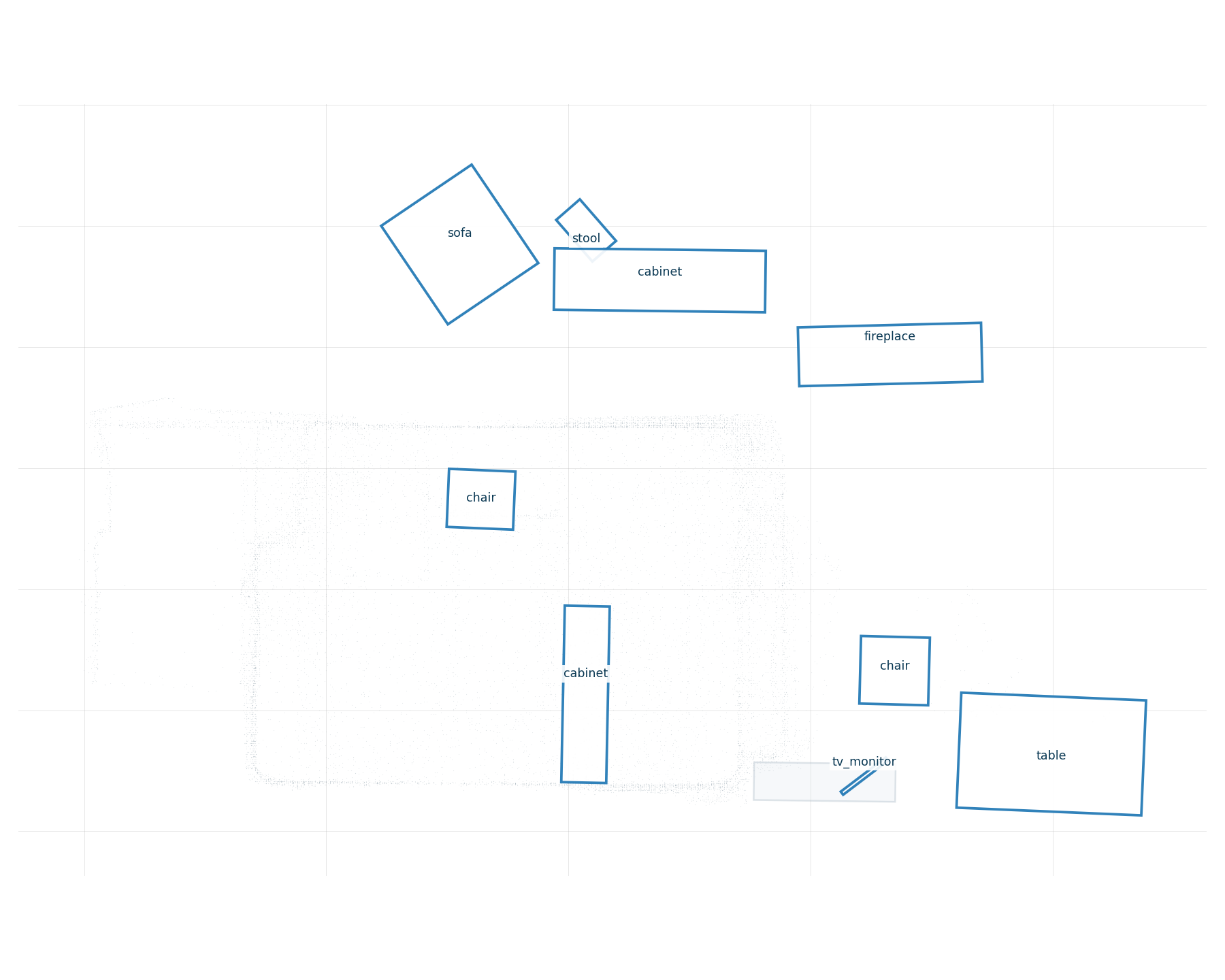}}
      \put(0.74,0.455){\makebox(0,0){\sffamily\scriptsize\bfseries Fireplace}}
      \put(0.715,0.095){\makebox(0,0){\sffamily\scriptsize\bfseries TV}}
      \put(0.87,0.055){\makebox(0,0){\sffamily\scriptsize\bfseries Table}}
    \end{picture}
    \par\small Fireplace above TV; table right of TV
  \end{minipage}
  \caption{\textbf{A fixed-observation global-layout failure.}
  The official Base chooses B and ChronoReplay chooses D on OVO-S L4.3.
  The upper panel shows eight timestamped frames from their shared Uniform-128
  input. The lower panels reproduce the two source option diagrams in their
  original orientations, with enlarged labels beside the key objects.
  Captions describe positions within these diagrams.
  The fireplace and table placements make the disagreement visible.}
  \label{fig:global-layout-failure}
\end{figure}

\clearpage

\section{Supplementary Benchmark Results}
\label{app:full-results}
\label{app:full-benchmark-tables}

\subsection{OVO-S-Bench}

Table~\ref{tab:ovos-main} reports the complete twelve-family decomposition
for the official base and \method{} under Uniform-128 and adaptive access.
Table~\ref{tab:ovos-system} reports the corresponding level means.
The independent Uniform-81 control is in Appendix~\ref{app:access-effects}. The supplementary table below
provides additional published aggregate reference scores; the evaluation
population and comparison scope are defined in Appendix~\ref{app:protocol}.

\begin{table}[htbp]
\caption{Additional published OVO-S aggregate reference rows (\%).  These
rows provide external context and are not used for matched training
attribution.}
\label{tab:ovos-published-full}
\centering
\footnotesize
\begin{tabular}{lcccccc}
\toprule
Model & Params & L1 & L2 & L3 & L4 & Overall\\
\midrule
\rowcolor{GroupGray}
\multicolumn{7}{l}{\textit{Proprietary MLLMs}}\\
GPT-5.4 & -- & 54.64 & 57.59 & 50.80 & 40.53 & 50.89\\
Gemini-3.1-Pro & -- & 61.92 & 64.04 & 55.90 & 54.90 & 59.19\\
Gemini-3.1-Flash-Lite & -- & 54.09 & 52.19 & 54.14 & 42.84 & 50.81\\
Grok-4.1-Fast & -- & 44.80 & 46.59 & 48.49 & 35.04 & 43.73\\
\midrule
\rowcolor{GroupGray}
\multicolumn{7}{l}{\textit{Open-source MLLMs}}\\
InternVL-3.5 & 38B & 54.65 & 54.40 & 45.53 & 41.94 & 49.13\\
InternVL-3.5 & 241B-A28B & 55.55 & 55.72 & 51.62 & 40.54 & 50.85\\
Qwen3-VL & 4B & 43.25 & 48.18 & 54.54 & 41.35 & 46.83\\
Qwen3-VL & 32B & 50.14 & 51.91 & 51.94 & 41.22 & 48.80\\
Qwen3-VL & 235B-A22B & 52.45 & 55.18 & 61.20 & 45.71 & 53.64\\
\bottomrule
\end{tabular}
\end{table}
\FloatBarrier

\subsection{OST-Bench}

Table~\ref{tab:ost-full} retains the aggregate OST summary and the complete
published reference set used in this paper.  The same native scorer and
scene-disjoint manifest are used for the official base and ChronoReplay rows.

\begin{table}[htbp]
\caption{Aggregate OST-Bench summary and published reference set (\%).
Published rows use the released full split; the final block is compared only
within the matched scene-disjoint protocol.}
\label{tab:ost-full}
\centering
\footnotesize
\setlength{\tabcolsep}{3.2pt}
\resizebox{\textwidth}{!}{%
\begin{tabular}{lcccccccc}
\toprule
Model & Overall & A. State & A. Info & AO & Judgement & Temporal & Counting & Estimation \\
\midrule
\rowcolor{GroupGray}
\multicolumn{9}{l}{\textit{Proprietary MLLMs (published)}}\\
Claude-3.5-Sonnet & 47.8 & 45.6 & 65.6 & 32.9 & 57.4 & 46.7 & 57.9 & 26.9\\
Gemini-2.0-Flash & 49.5 & 45.1 & 70.8 & 33.8 & 61.1 & 45.8 & 59.4 & 28.1\\
Gemini-2.0-Flash (Thinking) & 54.2 & 47.1 & 72.3 & 42.8 & 63.6 & 61.1 & 61.9 & 28.9\\
GPT-4o & 48.7 & 38.8 & 72.8 & 33.5 & 59.8 & 48.6 & 59.8 & 24.0\\
GPT-4.1 & 53.4 & 47.2 & 76.5 & 37.7 & 66.4 & 51.7 & 60.6 & 29.1\\
\midrule
\rowcolor{GroupGray}
\multicolumn{9}{l}{\textit{Open-source MLLMs (published)}}\\
InternVL-2.5-8B & 44.6 & 41.9 & 58.6 & 34.6 & 52.4 & 42.7 & 56.4 & 29.3\\
InternVL-2.5-38B & 50.8 & 45.4 & 73.9 & 34.0 & 61.4 & 45.9 & 61.1 & 31.5\\
InternVL-2.5-78B & 51.1 & 46.5 & 74.0 & 32.9 & 61.5 & 47.1 & 65.9 & 29.0\\
Qwen2.5-VL-7B & 41.2 & 40.4 & 52.6 & 31.5 & 50.1 & 36.6 & 62.1 & 22.7\\
Qwen2.5-VL-72B & 45.6 & 43.5 & 64.5 & 28.9 & 55.9 & 41.6 & 61.5 & 22.9\\
\midrule
\rowcolor{GroupGray}
\multicolumn{9}{l}{\textit{Matched scene-disjoint evaluation}}\\
Official base & \BaseOst{} & \BaseOstAState{} & \BaseOstAInfo{}
& \BaseOstAO{} & \BaseOstJudge{} & \BaseOstTemporal{}
& \BaseOstCount{} & \BaseOstEst{} \\
\rowcolor{MethodBlue}
ChronoReplay & \CsaOst{} & \CsaOstAState{} & \CsaOstAInfo{}
& \CsaOstAO{} & \CsaOstJudge{} & \CsaOstTemporal{}
& \CsaOstCount{} & \CsaOstEst{} \\
\bottomrule
\end{tabular}}
\end{table}
\FloatBarrier

The gain spans every reported component, including continuous Estimation and
Temporal-localization.  The matched origin experiment is reported separately
in the temporal-origin block of Table~\ref{tab:controlled-analysis}; it does
not replace this checkpoint-level
comparison.

\subsection{UCS-Bench}

Table~\ref{tab:ucs-main} reports four category aggregates and eight UCS subcategories over
the complete 8,114-question evaluation. The table below adds the aggregate
published reference scores and matched model comparison. Appendix~\ref{app:protocol}
defines causal access and scoring scope. Training-topology robustness is
evaluated separately in Appendix~\ref{app:topology-replication}.

\begin{table}[htbp]
\caption{Published UCS aggregate reference set and matched 8,114-question
evaluation (\%).  Published rows use their reported frame protocols.}
\label{tab:ucs-published-full}
\centering
\footnotesize
\resizebox{\textwidth}{!}{%
\begin{tabular}{lccccccc}
\toprule
Model & Size & Frames & Trajectory & Position & Proximity & Category & Overall\\
\midrule
\rowcolor{GroupGray}
\multicolumn{8}{l}{\textit{Human and proprietary references}}\\
Human & -- & -- & 91.7 & 87.6 & 96.8 & 90.0 & 91.0\\
DeepSeek-V3 & -- & 50 & 42.0 & 33.2 & 37.7 & 32.3 & 35.6\\
Doubao-Seed-1.6V & -- & 50 & 49.6 & 42.8 & 56.0 & 44.6 & 47.3\\
GPT-5 & -- & 50 & 44.5 & 37.8 & 55.4 & 41.4 & 43.7\\
Gemini-3-Flash & -- & 50 & 43.8 & 38.0 & 52.9 & 38.9 & 42.2\\
\midrule
\rowcolor{GroupGray}
\multicolumn{8}{l}{\textit{Open-source MLLMs}}\\
InternVideo2.5 & 8B & 64 & 44.2 & 39.4 & 52.7 & 42.2 & 43.8\\
Qwen2.5-VL & 7B & 64 & 41.6 & 40.8 & 51.3 & 42.5 & 43.4\\
Qwen2.5-VL & 32B & 64 & 50.3 & 40.9 & 55.4 & 46.0 & 47.3\\
Qwen3-VL & 8B & 64 & 44.6 & 40.1 & 50.9 & 42.9 & 44.0\\
Qwen3-VL & 32B & 64 & 50.3 & 46.5 & 55.2 & 46.7 & 49.0\\
DirectMe (Qwen3-VL) & 8B & 1fps & 54.1 & 47.6 & 58.0 & 46.4 & 50.5\\
\midrule
\rowcolor{GroupGray}
\multicolumn{8}{l}{\textit{Matched evaluation}}\\
Official Qwen3-VL-8B & 8B & 128 & \BaseUcsTrajectory{} & \BaseUcsEgo{}
& \BaseUcsProximity{} & \BaseUcsObject{} & \BaseUcs{}\\
\rowcolor{MethodBlue}
ChronoReplay & 8B & 128 & \CsaUcsTrajectory{} & \CsaUcsEgo{}
& \CsaUcsProximity{} & \CsaUcsObject{} & \CsaUcs{}\\
\bottomrule
\end{tabular}}
\end{table}
\FloatBarrier

\clearpage

\section{Relation-Supervision Diagnostics}
\label{app:relation-diagnostics}
\label{app:compatibility-controls}
\label{app:relation-algebra}

This appendix reports a negative diagnostic: whether adding a more
discriminative auxiliary relation interface, a binary compatibility judgement
between a causal state and a typed relation claim, yields a more reusable
native representation.  These variants do not contribute to the headline
\method{} checkpoint or its training data.  Relation rows are compiled from
verified source relations (signed heading, metric displacement, temporal
retrieval, counting, topology, and local state) paired with controlled
incompatible alternatives that stay in the same value domain; positive and
incompatible claims are independent continuations of one observation prefix,
and only the final compatibility token receives loss.  Source question,
candidate order, option letter, benchmark identity, and measured score are
absent from the state prefix.  Within-state normalization assigns fixed mass
to each causal state regardless of the number of relations $K$; the local
$K=1$ control adds \ProbeKOneRows{} rows over \ProbeKOneStates{} states, and
cross-type $K=4$ adds \FinalProbeRows{} rows over \FinalProbeStates{} states.

\begin{table}[htbp]
\caption{Overview of relation-supervision diagnostics.  Native columns are
complete model-only benchmark scores; Readout is the matched withheld
compatibility diagnostic (\%).}
\label{tab:relation-variant-overview}
\centering
\small
\begin{tabular}{lcccc}
\toprule
Variant & OVO-S & OST & UCS & Readout \\
\midrule
\method{}, no relation loss & \CsaOvosUniform{} & \CsaOst{} & \CsaUcs{} & -- \\
Local compatibility, $K=1$ & \LocalCompatOvosUniform{} & \LocalCompatOst{}
& \LocalCompatUcs{} & \LocalCompatAccuracy{} \\
Same type, $K=4$ & \SameTypeKFourOvos{} & \SameTypeKFourOst{}
& \SameTypeKFourUcs{} & \SameTypeKFourReadout{} \\
Cross type, $K=4$ & \CrossTypeKFourOvos{} & \CrossTypeKFourOst{}
& \CrossTypeKFourUcs{} & \CrossTypeKFourReadout{} \\
State-path firewall & \StateWriteOvosUniform{} & \StateWriteOst{}
& \StateWriteUcs{} & \StateWriteReadout{} \\
\bottomrule
\end{tabular}
\end{table}
\FloatBarrier

Cross-type supervision improves the withheld relation readout over the matched
same-type control, but its native three-endpoint mean is lower.  The
state-path firewall, which restricts relation gradients to the visual-state
route, raises UCS while reducing OVO-S and OST.  Formulation controls on
complete OST and the 480-row withheld readout behave the same way: removing
explicit type names gives \NoTypeOst{} OST and \NoTypeReadout{} readout,
presenting candidates jointly gives \JointChoiceOst{} and
\JointChoiceReadout{}, and removing state normalization gives
\UnnormalizedStateMassOst{} and \UnnormalizedStateMassReadout{}, so each
moves the local readout without improving native transfer.  An
unbalanced-polarity variant reaches \UnbalancedN{}\% on incompatible claims
but only \UnbalancedY{}\% on compatible ones (\UnbalancedOst{} OST).
Auxiliary separability is therefore not sufficient for native cross-interface
transfer, which the main paper evaluates through the native benchmarks.

\section{Reproducibility Details}
\label{app:reproducibility}

\subsection{Executable Configuration}

All runs start from the pinned \texttt{Qwen/Qwen3-VL-8B-Instruct} revision
listed in Table~\ref{tab:execution-config} and reuse its tokenizer, processor,
and chat template.  Training follows Table~\ref{tab:training-config}, without
adapters, continuation, fusion, curriculum, RL, or offload.

\begin{table}[H]
\caption{Execution settings for the headline, target-format, and chronology
comparisons in Appendix~\ref{app:experimental-details}.}
\label{tab:execution-config}
\centering
\small
\begin{tabular*}{\textwidth}{@{\extracolsep{\fill}}ll}
\toprule
Setting & Value\\
\midrule
Base-model revision
& \path{0c351dd01ed87e9c1b53cbc748cba10e6187ff3b}\\
Headline / target-format seed & 42\\
Chronology seeds & 42 (Run 1); 73 (Run 2)\\
Headline and chronology topology & 8 GPUs; gradient accumulation 2\\
Target-interface topology & 4 GPUs; gradient accumulation 4\\
\bottomrule
\end{tabular*}
\end{table}

Inference is deterministic. OVO-S uses greedy answer generation with the
fixed evaluation prompt and deterministic answer extraction, an eight-token
ceiling, and batch size two. OST uses greedy decoding with
temperature $0$, top-$p=1$, one sample, and 512 tokens; UCS uses temperature
$0$, one sample, and eight tokens.  No voting, prompt ensemble, or
benchmark-dependent parameter choice contributes to a paper row.

\subsection{Data and Media Audits}

All training examples are bound to canonical scene identities and source
annotations before optimization.  The same data separation is applied to the
main model and every matched control.  Complete media, row, and scene manifests
are retained with the released configuration.

Each release binds row hashes, counts, family mass, supervised turns, and the
media manifest.  Every referenced image and video is decoded and audited once
before training: items at or above the 32,768-token limit or above the
declared visual-pixel budget are rejected, and isolation audits bind canonical
physical scenes.

\subsection{Scorer and Manifest Contracts}

Each reported result binds the model, observations, predictions, annotations,
and scoring protocol. Evaluation follows the fixed OVO-S protocol in
Appendix~\ref{app:protocol}, the fixed 8,351-turn OST manifest, and all
8,114 UCS questions. Result
records and source manifests link every evaluation identity to the values
defined in \path{main.tex} and will be released with the code.

Native responses are passed to the unchanged task scorer.  For canonical
responses, a frozen reader extracts the leading typed \texttt{value} field
from the generated response alone and supplies that value to the same scorer;
it does not inspect the question, task identity, reference answer, label, or
score.  All 8,351 OST responses and all 8,114 UCS responses yield an extracted
value.  The official OST scorer evaluates its designated 7,814 turns.